\newcommand{\cmark}{\ding{51}}%
\newcommand{\xmark}{-}
\newcommand{\fref}[1]{Fig.~\ref{#1}}
\newcommand{\tref}[1]{Table~\ref{#1}}
\newcommand{\provtype}[1]{\ensuremath{\mathsf{\prov{#1}}}}
\newcommand{\codeformat}[1]{\texttt{\small #1}}
\newcommand{\nodename}[1]{\texttt{#1}}
\newcommand{\pg}{Pok\'emon Go}
\newcommand{\pokemon}{Pok\'emon}
\newcommand{\pokemons}{Pok\'emons}
\newcommand{\pokestop}{Pok\'eStop}
\newcommand{\pokestops}{Pok\'eStops}
\newcommand{\textlabel}[1]{\textsf{#1}}
\newcounter{nalg} 
\newcounter{questionchapter}
\newcounter{question}[questionchapter]
\renewcommand\thequestion{\textbf{E\arabic{question}}}
\newenvironment{questions}[1][]{\refstepcounter{questionchapter}\refstepcounter{question}\par\medskip
\begin{list}{\thequestion}{\usecounter{question}}}{\end{list}\medskip}
\newtheorem{example}{Example} 
\newtheorem{theorem}{Theorem}
\newtheorem{proposition}[theorem]{Proposition}
\newtheorem{definition}[theorem]{Definition}
\newcommand{\ent}{\textit{ent}}
\newcommand{\ag}{\textit{ag}}
\newcommand{\act}{\textit{act}}
\newcommand{\wgb}{gen}
\newcommand{\used}{use}
\newcommand{\waw}{assoc}
\newcommand{\wdf}{der}
\newcommand{\spec}{spe}
\newcommand{\wib}{wib}
\newcommand{\wat}{wat}
\newcommand{\abo}{abo}
\newcommand{\alt}{alt}
\newcommand{\Lab}{lab}
\newcommand{\lab}{\textit{l}}
\newcommand{\LAB}{\mathcal{L}}
\newcommand{\real}{\mathbb{R}}
\newcommand{\dotproduct}[1]{\left\langle #1 \right\rangle}
\newcommand{\sz}[1]{\left\vert{#1}\right\vert}
\newcommand{\labwalk}{LAB}
\newcommand{\kernelfunction}{k}
\newcommand{\kernelset}{\mathcal{X}}
\newcommand{\embeding}{\psi}
\newcommand{\hilbert}{\mathcal{H}}
\newcommand{\ftype}{\phi}
\newcommand{\aftype}{F}
\newcommand{\ftypevector}{\text{VEC}}
\newcommand{\fkernel}{k}
\newcommand{\depth}{h}
\newcommand{\ang}[1]{\langle#1\rangle}
\newcommand{\qname}[1]{\codeformat{#1}}
\newcommand{\person}{\nodename{person13}}
\newcommand{\patzero}{\nodename{patient7$_0$}}
\newcommand{\pattwo}{\nodename{patient$7_2$}}
\newcommand{\patthree}{\nodename{patient$7_3$}}
\newcommand{\wardone}{\nodename{ward27}}
\newcommand{\actone}{\nodename{admitting3}}
\newcommand{\acttwo}{\nodename{treating5}}
\newcommand{\discharged}{\qname{mimic:DischargedPatient}}
\newcommand{\edgeset}{\tau}
\newcommand{\GG}{\mathcal{G}}
\newcommand{\vv}{\mathcal{V}}
\newcommand{\ee}{\mathcal{E}}
\newcommand{\nodelabels}{S}
\newcommand{\familynodelabels}{\mathcal{\nodelabels}}
\renewcommand{\provtype}[1]{\Lab(#1)}  
\newcommand{\OO}{\mathcal{O}}
\newcommand{\wl}{WL\xspace}
\newcommand{\textdef}[1]{\textit{#1}}
\useunder{\uline}{\ul}{}
\begin{document}
%
\title{Provenance Graph Kernel}
%
%
%
%

\author{David Kohan Marzagão, Trung Dong Huynh, Ayah Helal, Sean Baccas, Luc Moreau
\IEEEcompsocitemizethanks{\IEEEcompsocthanksitem D. Kohan Marzagão is with the Department
of Engineering, University of Oxford, UK. He is also affiliated to King's College London.\protect\\
\IEEEcompsocthanksitem T.D. Huynh, and L. Moreau are with Department of Informatics, King's College London, UK. \protect\\
\IEEEcompsocthanksitem A. Helal is with Department of Computer Science, Exeter University, UK. She is also affiliated to King's College London.\protect\\
\IEEEcompsocthanksitem S. Baccas is with Polysurance.}
\thanks{This work is supported by a Department of Navy award (Award No. N62909-18-1-2079) issued by the Office of Naval Research. The United States Government has a royalty-free license throughout the world in all copyrightable material contained herein.}
}

%
%

\markboth{Journal of \LaTeX\ Class Files,~Vol.~14, No.~8, August~2015}%
{Shell \MakeLowercase{\textit{et al.}}: Bare Demo of IEEEtran.cls for Computer Society Journals}
%



\IEEEtitleabstractindextext{%
\begin{abstract}
Provenance is a record that describes how entities, activities, and agents have influenced a piece of data;
it is commonly represented as graphs with relevant labels on both their nodes and edges.
With the growing adoption of provenance in a wide range of application domains, users are increasingly confronted with an abundance of graph data, which may prove challenging to process.
Graph kernels, on the other hand, have been successfully used to efficiently analyse graphs.
In this paper, we introduce a novel graph kernel called \emph{provenance kernel}, which is inspired by and tailored for provenance data.
It decomposes a provenance graph into tree-patterns rooted at a given node and considers the labels of edges and nodes up to a certain distance from the root.
We employ provenance kernels to classify provenance graphs from three application domains.
Our evaluation shows that they perform well in terms of classification accuracy and yield competitive results when compared against existing graph kernel methods and the provenance network analytics method while more efficient in computing time.
Moreover, the provenance types used by provenance kernels also help improve the explainability of predictive models built on them.
\end{abstract}

\begin{IEEEkeywords}
kernel methods, data provenance, graph classification, provenance analytics, interpretable machine learning
\end{IEEEkeywords}}

\maketitle

\IEEEdisplaynontitleabstractindextext

%
\IEEEpeerreviewmaketitle

\IEEEraisesectionheading{\section{Introduction}\label{sec:introduction}}

\IEEEPARstart{P}{rovenance}
is a form of knowledge graph providing an account of the actions a system performs, describing the data involved and the processes carried out over those data.
More specifically, the World Wide Web Consortium (W3C) defines provenance as a \textit{record that describes the people, institutions, entities, and activities involved in producing, influencing, or delivering a piece of data or a thing in the world} \cite{w3c-prov-dm}. 
Provenance is increasingly being captured in a variety of application domains, from scientific workflows~\cite{Alper2013,chirigati2013reprozip} supporting their reproducibility, to climate science~\cite{ma2014capturing}, and human-agent teams in disaster response~\cite{Ramchurn2016}.
However, simultaneously, users are being confronted with an increasing volume of provenance data, especially from automated systems, making it a challenge to extract meaning and significance.
In this paper, we are interested in the comparison of provenance graphs leading to their classification according to their similarities, with means to procedurally extract explanations for a given classification decision. 

Similarities between graphs have been often studied in the context of graph classification and detection of malicious activity, with a plethora of different methods being proposed to that end (for recent surveys on graph kernels, see~\cite{kriege2020survey}, \cite{nikolentzos2019survey}, and~\cite{graph-kernels-challenges}).
Such methods explore graph properties using concepts such as shortest paths between nodes~\cite{Borgwardt2005}, sub-trees~\cite{shervashidze2011weisfeiler, feragen2013scalable}, or random walks~\cite{gartner2003graph}, to mention just a few.
In many cases, the graphs being analysed may have continuous or categorical labels for their nodes or edges, but little attention has been given to developing a graph kernel that considers \emph{both} edge and node categorical labels.
A well-known family of graphs that can benefit from such a kernel method is the one of provenance graphs.

To that end, and inspired by the expressiveness of edge and node labels in provenance graphs conditioned to their distance to a given root node, we introduce a graph kernel method that not only captures provenance graph patterns efficiently, but is explainable, i.e., can be used to help generate explanations of decisions based on them.
To extract features from such graphs, our method makes use of the notion of \textdef{provenance types}. 
Intuitively, provenance types are a simplification of tree-patterns of a given depth $\depth$ rooted at a given node. It captures the set of edge-labels occurring at each layer of these tree-patterns, giving an unpolished indication of the node's past history. Types also take in account the labels of nodes at the leaves of such tree-patterns. For each graph, a feature vector is created: it counts the occurrence of each provenance type encountered in this graph for tree-patterns up to a given depth. 
We show that the computational complexity of inferring types up to depth $\depth$ of nodes in a graph with $m$ edges is bounded by $\OO(\depth ^2 m)$.

We employ provenance kernels in classifying provenance graphs in six different data sets from three application domains.
We compare the performance of provenance kernels against that of existing graph kernels in those classification tasks, showing that provenance kernels are competitive in terms of accuracy and at the same time as being fast in terms of execution times.
Compared to Provenance Network Analytics (PNA)~\cite{Huynh2018}, a provenance-specific graph classification method, provenance kernels outperform it both in terms of accuracy and computation time.

Furthermore, provenance types employed by provenance kernels (to represent features of provenance graphs) are shown to enable users to gain insights into why a classification was made.
We present an example of how the importance of each provenance type can be inferred from a trained classifier using LIME~\cite{lime}, identifying the most influential types with respect to a particular classification task.
For each identified provenance type, a verbal description can be generated computationally to facilitate the explaining of a classification decision.

In summary, the contribution of this paper is twofold: 
\begin{enumerate}
    \item The definition, implementation, and evaluation of a novel graph kernel method, i.e.\ provenance kernels, which are shown to perform well in classifying provenance graphs when compared to standard graph kernels and the PNA method.
    \item Illustrating how provenance types can help improve the explainability of classification models built with provenance kernels.
\end{enumerate}  

In the remainder of the paper, Section~\ref{sec:framework} introduces provenance kernels, including an efficient algorithm to infer feature vectors from input graphs to be used for their classification.
The related work is then discussed in Section~\ref{sec:related_work} with a comparison of the theoretical computational complexity between provenance kernels and existing graph kernels.
Section~\ref{sec:evaluation} presents the empirical evaluation of provenance kernels in six classification tasks in comparison with generic graph kernels and the PNA method.
Section~\ref{sec:explainability} then discusses the use of provenance types in conjunction with LIME to explain classification decisions.
Finally, Section~\ref{sec:conclusion} concludes the paper and outlines the future work.

\section{Provenance Kernel Framework}\label{sec:framework}

In this section, we motivate and present a graph kernel method inspired by particular characteristics of provenance graphs, namely, the chronological aspect represented by relations in such structures. We base our definitions on the PROV data model \cite{w3c-prov-dm}, a \textit{de jure} standard for provenance data.
We first lay out the provenance foundations and the graph kernel  concepts that we will use throughout the paper.
We then motivate the idea of provenance types, present the algorithm to infer them, analysing its theoretical computational complexity, to finally provide a formal definition of provenance kernel.

\subsection{Preliminaries}

The main node and edge labels of the PROV framework, as well as their notation to be used throughout this paper, is presented in \tref{fig:labels}.
The three node labels shown at the top of the table are called \textdef{PROV generic} labels because they are used irrespective of particular provenance applications.
The labels of start and destination nodes connected by edges of each given edge label are also specified in, respectively, the third and fourth columns.
For a complete description of the PROV data model, refer to \cite{w3c-prov-dm}.

We denote $G = (V, E, \nodelabels, L)$ a \textdef{provenance graph} in which $V$ corresponds to the set of nodes of $G$ with $\sz{V} = n$, $E$ its set of its directed edges, with $\sz{E} = m$. The sets of labels of nodes and edges of $G$ are denoted, respectively, by $\nodelabels$ and $L$. An edge $e \in E$ is a triplet $e = (v, u, \lab)$,  where $v \in V$ is its \textdef{starting point}, $u \in V$ is its \textdef{ending point}, and $\lab \in L$, also denoted $\Lab(e)$, is the edge's \textdef{label}.\negmedspace\footnote{In the absence of ambiguity, we will abuse notation and refer to edges as simply pairs $(v,u)$} Note that defining edges as triplets instead of pairs allow provenance graphs to have more than one edge between the same pair of vertices.
Each node $v \in V$ can have more than one label, and thus $\provtype{v} \in \mathbb{P}(\nodelabels)$, where $\mathbb{P}(\nodelabels)$ denotes the power-set of $\nodelabels$, i.e., the set of subsets of $\nodelabels$.
Note that provenance graphs are \textdef{finite}, \textdef{directed}, and \textdef{multi-graphs} (as there might exist more than one edge between the same pair of nodes).

Considering only PROV generic labels, $\nodelabels = \{\ag, \act, \ent\}$. In case application-specific labels are used, however, set $\nodelabels$ is enlarged to also include such specific labels.  
Regarding edge-labels, typically $L = \{\wdf, \spec, \alt, \wib, \wgb, \dots \}$, where the edge $(v,u,\waw)$, for example, indicates that activity $v$ was associated with agent $u$. 
We will often work with more than one graph, and thus we define $\GG = (\vv, \ee, \familynodelabels, \LAB)$ as a (finite) family of graphs, in which $\vv$, $\ee$, $\familynodelabels$, and $\LAB$, are the union of the sets of, respectively, nodes, edges, node labels, and edge labels of graphs in $\GG$.
    \begin{table}
        \centering
        \begin{tabular}{@{}cccc@{}}
        \toprule
             \textbf{Label} & \textbf{Notation} & \textbf{Source Type} & \textbf{Destination Type}  \\ 
             \midrule
             Agent & \ag   & - & - \\ 
             Activity & \act & - & -  \\ 
             Entity & \ent  & - & -  \\ 
             \midrule
             wasDerivedFrom & \wdf & \ent & \ent \\ 
             specializationOf & \spec & \ent & \ent  \\ 
             alternateOf  & \alt  & \ent & \ent \\ 
             wasInvalidatedBy & \wib & \ent & \act \\ 
             wasGeneratedBy & \wgb & \ent & \act \\ 
             used & \used & \act & \ent \\ 
             wasAttributedTo & \wat & \ent & \ag  \\ 
             wasAssociatedWith & \waw & \act & \ag \\ 
             actedOnBehalfOf & \abo & \ag & \ag \\ 
             wasStartedBy & wsb & \act & \ent  \\ 
             wasEndeddBy & web & \act & \ent  \\ 
             wasInformedBy & wifb & \act & \act \\ 
            \bottomrule
        \end{tabular}
    \caption{PROV generic labels for nodes (first three rows) and edges, and their
    notation used 
    throughout this paper. The third and fourth columns show, respectively, the label of source and destination nodes for each edge label. 
    }
    \label{fig:labels}
\end{table}

In this work, we will study provenance kernels in both scenarios: when application-specific labels are provided and when they are not.
In terms of the generality of graph structures considered in this paper, however, observe that existence of cycles may not be discarded entirely: edges such as usage and generations may create cycles, as well as future invalidation of entities.
This is to say that, although edges, for well defined provenance, in general `point to the past', cycles cannot be totally excluded and, for that reason, our definitions will make no restrictive assumptions on graph properties, such as acyclicity.
Indeed, the definitions that follow apply to a general graph with labelled edges and nodes. 

The \textdef{forward-neighbourhood} of a given node $v \in \vv$ is the set of nodes it ``points to'', i.e., $v^+ = \{u \mid (v,u, \lab) \in \ee\}$. Analogously, the \textdef{backward-neighbourhood} of $v$ is denoted by $v^- = \{u \mid (u, v, \lab) \in \ee\}$.
We say a node $u$ is \textdef{distant from} $v$ by $x$ if there is a \textdef{walk} from $v$ to $u$ of length $x$, where $v$ is the walks's \textdef{starting} node, and $u$ its \textdef{ending} node. That is, there is a sequence of $x$ (not necessarily distinct) consecutive edges starting at $v$ and ending at $u$. More formally, a sequence $(e_1, \dots, e_s)$ is of consecutive edges iff for $1 \leq i < s$, the pair $e_i = (v_i, u_i, \lab_i)$ and $e_{i+1} = (v_{i+1}, u_{i+1}, \lab_{i+1})$ is such that $u_i = v_{i+1}$. 
The $x+1$ nodes in a walk of length $x$ need not to be distinct. A \textdef{path}, on the other hand, is defined as a walk where nodes do not repeat. 

A function $\kernelfunction: \kernelset \times \kernelset \to \real$ is called a valid \textdef{kernel} on set $\kernelset$ if there is a real Hilbert space $\hilbert$ and a mapping $\embeding$ such that $\kernelfunction(x,y) = \dotproduct{\embeding(x), \embeding(y)}$. In order to show such a Hilbert space exists (and therefore $\kernelfunction$ is called its \textit{reproducing kernel}), it is enough to prove that $\kernelfunction$ is symmetric and positive semi-definite (p.s.d.) \cite[Theorem 3]{berlinet2011reproducing}, i.e., for every subset $\{x_1, \dots, x_t\} \subset \kernelset$, we have that the $t \times t$ 
matrix $M$ defined by $M(i,j) = \kernelfunction(x_i, x_j)$ is p.s.d. For $M$ to be p.s.d, we simply need $\sum_{i,j} c_ic_j M(i,j) \geq 0$ for all $c_1, \dots, c_t \in \real$.


\subsection{Motivation}

Consider a node $v$ in a provenance graph. The set of edges starting at this node may be seen as related to its recent history, i.e., such edges point to other nodes that may, in case of entities, be the activity that generated it, or, in case of activities, be the agent responsible for its execution. Going further, edges that are, in turn, connected to the neighbours of $v$, represent $v$'s earlier history, and so on. The idea of capturing the label of such edges, as well of nodes, taking in account their distance to the root, lies in the core of what we define as provenance types. Provenance types are the building blocks for provenance kernels.

For example, refer to \fref{fig:mimic_example}. It depicts an example of a short patient hospitalisation, from admission to discharging. Here, $V = \{\person, \patzero, \wardone, \dots \}$, while $\nodelabels = \{\ent, \ag, \act\}$. If application-specific types are used, $S$ is enlarged to include labels such as \qname{mimic:Patient}, \qname{mimic:Ward}, etc.
We adopt the de facto colour and layout convention for provenance graphs that shows entities as a yellow-filled ellipses, activities as blue-filled rectangles, and agents as orange-filled trapeziums.
Time flows downwards in this convention, in which the entities $\patzero, \dots, \patthree$ represent the different `states' of the same person originally represented by node $\person$, culminating at $\patthree$ which also contains the provenance-specific label $\discharged$, indicating that the hospitalisation ended with the discharging of the patient.
The activities in this scenario are those that either admit the patient to a ward ($\actone$) or represent a treatment ($\acttwo$). Each is associated to the respective hospital ward in which the activity took place.

As a motivation for provenance types, consider nodes $\actone$ and $\acttwo$ in \fref{fig:mimic_example}. We can say that they share some similarity as both represent activities in this provenance graph, even though one is an admission and the other a treatment. Further, we can say that they share even more similarities as they are related to entities (via the \textit{\used} relation) and to some agent (via the \textit{\waw} edge label). Going one step further, however, these nodes do not present the same `history':  $\acttwo$ used an entity which was, in turn, generated by some other activity, whereas $\actone$ did not.
We formalise this idea of capturing a simplification of a node's history as the \textdef{provenance types} of a node.
First, we define label-walks, which will be later used in our definition of types.

\begin{figure}[t]
    \centering
    \def\svgwidth{0.49\textwidth}
    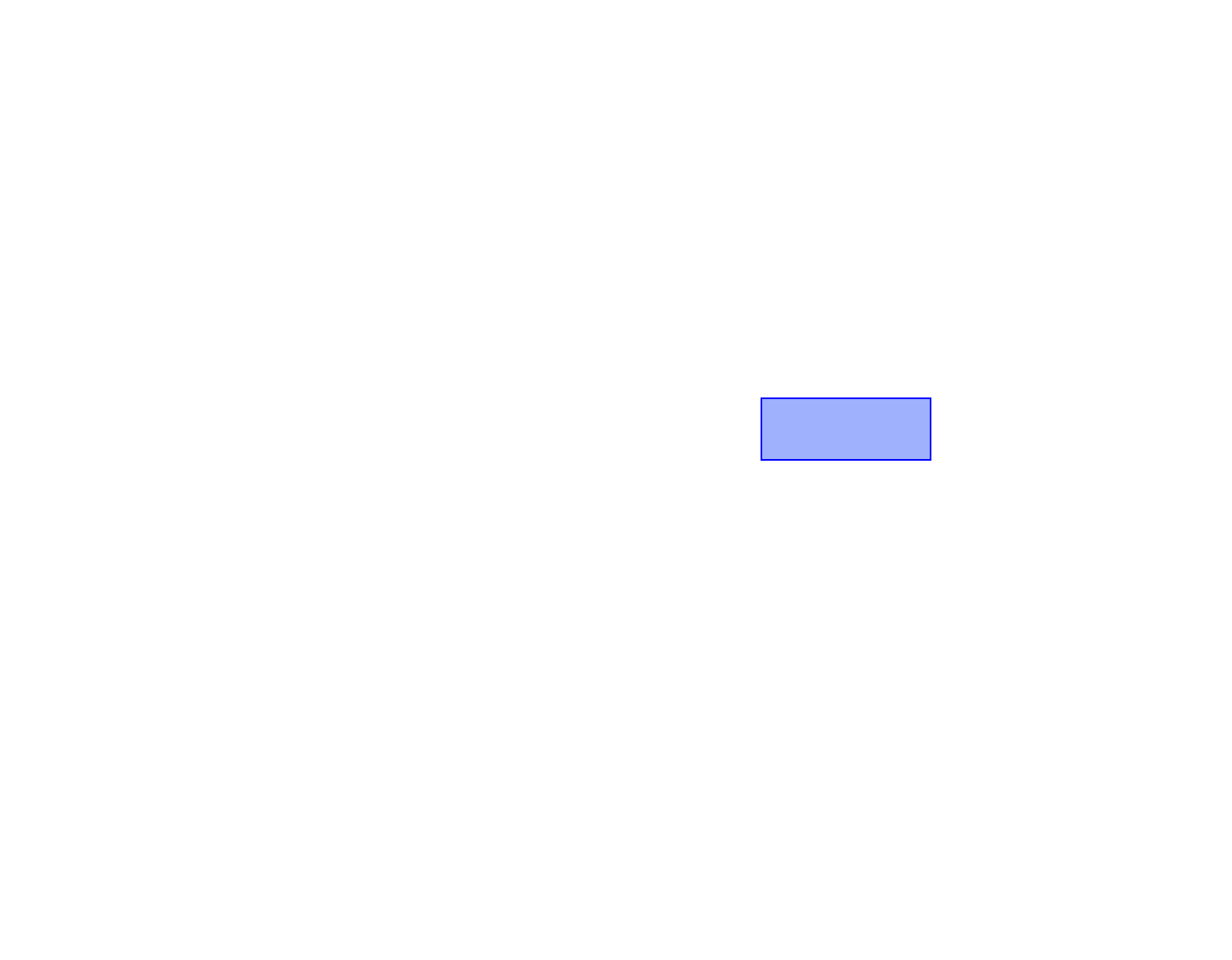
    \caption{A provenance graph that records the journey of a patient through a hospital admission. Yellow ellipses denote entities, blue rectangles denote activities, and orange trapeziums denote agents. The same patient is recorded multiple times via different entities to represent their different states over time.}
    \label{fig:mimic_example}
\end{figure}

\begin{definition}[Label-walk]
    Let $G = (V, E)$ be a directed multigraph. We define a label-walk as 
    \begin{equation}
        \labwalk(w) = (\Lab(e_1), \Lab(e_2), \dots, \Lab(e_{\depth}), \provtype{u})
    \end{equation}
    where $w =(e_1, \dots, e_{\depth})$ is a sequence of $\depth$ consecutive edges from $v$ to $u$.
    $\labwalk(w)$ is then a sequence of $\depth$ edge labels followed by the label of vertex $u \in V$. Thus, we say that $\labwalk(w)$ is a label-walk of length $\depth$. 
    We further define $\mathcal{W}^\depth(v)$ to be the set of all label-walks of length $\depth$ starting at a given node $v$. In the degenerated case of $\depth = 0$, the start and end nodes coincide and thus $\mathcal{W}^0(v)$ is defined by simply $(\provtype{v})$.
\end{definition}

    \noindent In simple terms, a label-walk is the sequence of the labels of edges along a walk followed by the label of its ending node. The intuition behind capturing the label of ending nodes of walks is twofold. First, we are able to define a base case, in which we consider a walk of null size, i.e., capturing only the node's label. Secondly, we are able to make use of application-specific node labels (such as \qname{mimic:Patient}, in \fref{fig:mimic_example}), which may provide crucial information for the analysis of provenance data.

\begin{example}\label{exm:label-walk}
    In this example we consider only PROV generic types. Consider the sequence of two consecutive edges given by
         $w =$ \Big((\patthree, \acttwo, \wgb), 
         (\acttwo, \pattwo, \used)\Big).  
   We have $\labwalk(w) = (\wgb, \used, \ent)$. Moreover, 
       $\mathcal{W}^2(\patthree) =$ \{(\wgb, \used, \ent), (\wgb, \waw, \ag), \\ 
       (\wdf, \wdf, \ent), (\wdf, \wgb,\act)\}.
\end{example}
    We can now define provenance types based on the set of all walks of a given length from each node in $G$. 
    
\begin{definition}[Provenance $\depth$-types] \label{def:prov_types}
    Let $G$ be a graph and $v \in V$. Let $\mathcal{W}^\depth(v)
    $ be the set of all label-walks of length $\depth$ starting at $v$. We now capture all the labels that are equally distant from $v$: we define, for $1 \leq i \leq \depth$,
    \begin{equation}
    \begin{split}
              \edgeset^{\depth}_i = \{ \Lab(e_{\depth - i}) \mid (\Lab(e_1), \Lab(e_2), \dots, \Lab(e_{\depth}),  \\ 
              \provtype{u}) \in \mathcal{W}^\depth(v)\} 
    \end{split}
    \end{equation} 
    as the set of edge labels that are in the 
    $(\depth-i)$-th layer when counting from $v$.\negmedspace\footnote{This apparent reversed choice of indexing will simplify the algorithm to infer such provenance types. The intuition is that, with this notation, $\edgeset^{\depth}_0$ always refers to node labels for any  $\depth$.} For $i =0$, we define 
    \begin{equation}
    \begin{split}
        \edgeset^{\depth}_0 = \{ \Lab(u) \mid (\Lab(e_1), \dots, \Lab(e_{\depth}),  \provtype{u}) \in \mathcal{W}^\depth(v)\}
        \end{split}
    \end{equation}
    We define the provenance $\depth$-type
    of $v$ as the sequence of sets
    \begin{equation}
       \ftype^{\depth}(v) = (\edgeset^{\depth}_\depth, \dots, \edgeset^{\depth}_0)
    \end{equation}
    In the case $\mathcal{W}^\depth(v) = \emptyset$, i.e. there is no walk of length $\depth$ from $v$, we define $\ftype^{\depth}(v) = \emptyset$. When clear from the context, we shall denote $\edgeset^{\depth}_i(v)$ as simply $\edgeset_i(v)$, or even $\edgeset_i$.
\end{definition}

\begin{example}\label{exm:f-types}
    Consider $\patthree$ discussed in Example \ref{exm:label-walk}. Its $2$-type combines the label-walks in $\mathcal{W}^2(\patthree)$
    to generate: 
    \begin{multline}
        \ftype^{2}(\patthree) = \\ = (\underbrace{\{\wgb, \wdf\}}_{\edgeset^{2}_2},
        \underbrace{\{\used, \waw, \wdf, \wgb\}}_{\edgeset^{2}_1}, 
        \underbrace{\{\ag, \act, \ent\}}_{\edgeset^{2}_0}) 
    \end{multline}
    Further, consider nodes $\actone$ and $\acttwo$ once again in \fref{fig:mimic_example}. Their $0$-types, $1$-types, and $2$-types are:
    \begin{align*}
        &\ftype^{0}(\actone) = \ftype^{0}(\acttwo) = (\{act\}) \\
        &    \ftype^{1}(\actone) = \ftype^{1}(\acttwo) = (\{\waw, use\}, \\ & \qquad \qquad \qquad \qquad \qquad \qquad \qquad \qquad \qquad \qquad \{act, ag\})
        \\
        &\ftype^{2}(\actone) = (\{use\}, \{der\},\{ent\}) \\
        &\ftype^{2}(\acttwo) = (\{use\}, \{der, gen\},\{act, ent\})
    \end{align*}
    These were both examples using only PROV generic types. By using application-specific types for the same two nodes $\actone$ and $\acttwo$, on the other hand, we have
    $(\{act, \textit{mimic:Admitting}\}) = \ftype^{0}(\actone) \neq \\ \ftype^{0}(\acttwo) = (\{act, \textit{mimic:Treating}\}). $
\end{example}

\noindent Note that two different sets of label-walks may give rise to the same provenance type, although the converse is not true, i.e., two different types cannot come from the same set of label-walks. This implies that the function that maps sets of label-walks to types is not, in general, injective. We claim that this is beneficial as it unifies different sets of label-walks that have very similar meaning in provenance. Note also that multiple occurrences of the same walks starting at $v$ carry out no difference in $v$'s provenance types as opposed to just one copy of each different walk. 

\subsection{Algorithm}

We now present an algorithm that infers all $\ftype^{i}$ for $0\leq i \leq \depth$ for nodes in a family of graphs $\mathcal{G}$ in $O(\depth^2M)$, where $M$ is the total number edges among graphs in $\mathcal{G}$. For a single graph, the algorithm runs in $O(\depth^2m)$, where $m$ is the number of edges in this particular graph.

        \subsubsection{The Algorithm}
        \fref{fig:atypes_algorithm} provides an algorithm to infer provenance $\depth$-types. First, we initialise all types $\ftype^i(v)$ with the empty set for all depths up to $\depth$ and all nodes. Moreover, we initialise as empty sets the building blocks of our $\depth$-types that record the labels of edges seen in at a given distance from each node (lines $1$-$3$). The intuition behind this explicit initialisation is that if we do not update a given $\ftype^i(v)$, the empty set will indicate that there are no label-walks of size $i$ starting at $v$. This will be used later as a condition in line $8$. 
        
        The loop starting at line $4$ infers the base of our algorithm: the $0$-type of all nodes, i.e., $\ftype^0(v)$, which is simply the set of labels of $v$ for each $v \in \mathcal{V}$.
        Each iteration of the loop starting in line $7$ will infer the $i$-type for all nodes. We first loop through all edges in $\mathcal{E}$ that can `lead us somewhere'. In other words, we are considering only edges $e = (v,u)$ that belong to walks of size $i$ starting at $v$. This is true if and only if $\ftype^{i-1}(u) \neq \emptyset$. We then make sure that label of $e$ is added to the set $\edgeset^i_i(v)$ (line $9$).
        Finally, the loop starting at line $10$ adds to the set of $\edgeset^i_j(v)$ the labels from set $\edgeset^{i-1}_j(u)$. We can finally in line $12$ construct the entire $i$-type for all nodes. 
        
        To see that the algorithm correctly infers provenance types, note that line $8$ guarantees that all label-walks of length $i$ starting at $v$ will be identified. Further, that lines $10$ and $11$ make sure that all labels in each of these label-walks will be fully inspected and added to $v$'s type accordingly. 
         
    \begin{figure}
            \centering        
            \begin{algorithm}[caption={$\textsc{Inferring types up to }\depth \,\,(\vv, \ee, \depth)$}, label={alg_dijkstra}]
initialise for all $v \in \mathcal{V}$ and for all $i \leq \depth$ 
    $\ftype^i(v) \leftarrow \emptyset$ 
    for all $0\leq j \leq i$,  $\edgeset_j^i(v)\leftarrow \emptyset$
for $v \in \mathcal{V}$
    $\edgeset_0^0(v) \leftarrow \Lab(v)$
    $\ftype^0(v) \leftarrow (\edgeset_0^0(v))$
for $1 \leq i \leq \depth$
    for each $e = (v,u) \in \mathcal{E}$ s.t. $\ftype^{i-1}(u) \neq \emptyset$
        add $\Lab(e)$ to $\edgeset^i_i(v)$
        for  $0 \leq j \leq i-1$
            $\edgeset_j^i(v)\leftarrow \edgeset_j^i(v) \cup \edgeset_j^{i-1}(u)$
    $\ftype^{i}(v) \leftarrow (\edgeset^i_{\depth}(v), \dots, \edgeset^i_0(v))$
return $\ftype^i(v)$ for all $v \in \vv$ and $0 \leq i \leq k$
          \end{algorithm}
            \caption{An algorithm that receives nodes and edges of a family of graphs, a parameter $\depth$, and outputs all provenance types $\ftype^0(v), \dots, \ftype^{\depth}(v)$ for all nodes $v$.}
            \label{fig:atypes_algorithm}
        \end{figure}

\subsubsection{Complexity Analysis}
        
            We are now showing that we need $O(\depth^2M)$ operations to infer the $\ftype^{\depth}(v)$ for each node $v$ in a family of graphs $G_1, \dots, G_s := \mathcal{G}$. Here, $N$ stands for the sum of the number of nodes in all provenance graphs and $M$ for the sum of the number of edges.
            We borrow part of the argument from \cite{shervashidze2011weisfeiler}.
            Lines $1$-$3$ can be done in $O(\depth^2 N)$, since we are initialising $\frac{1}{2}(\depth + 1)(\depth + 2)$ sets for each node in $\mathcal{V}$. Lines $4$-$6$ take $O(N)$. Let us now investigate the \textit{for} loop initiated in line $7$. We are entering this loop $\depth$ times, and in each of them we are investigating each edge at most once (loop stating in line $8$), and finally, for each edge, we are performing at most $\depth$ pairwise operations on sets of constant size (bounded by $\max\{\sz{T}, \sz{L}\}$). Line $12$ takes $O(N)$. Thus loop starting at line $7$ can be done in $O(\depth^2M)$, assuming $N = O(M)$. Which gives us the overall running time bounded by $O(\depth^2M)$ when we assume $N = O(M)$.


\subsection{Provenance Kernel}

In this section, we define the mapping of graphs into a high dimensional space by simply counting the number of occurrences of each provenance $\depth$-types up to depth $\depth$. We formally define a feature vector in the following definition.

\begin{definition}[Feature Vector]\label{def:feature_vector}
    Let $\GG$ be a family of graphs and define $\ftype^{\depth}(\vv) = \{\aftype_1, \dots, \aftype_s\}$ as the (enumerated) set of all provenance types of depth \textbf{up to} $\depth$ encountered in $\vv$. The feature vector of a graph $G \in \GG$ is given by:
    \begin{equation}
        \ftypevector^{\,h}(G) = (x_1, x_2, \dots, x_s)
    \end{equation}
    where $x_i = \sz{\{v \mid \ftype^{\depth}(v) = \aftype_i, \text{ and } v \in G\}}$
\end{definition}
Note that Definition~\ref{def:feature_vector} explicitly considers the possibility parallel inference of feature vectors among all graphs in a family $\GG$, as previously suggested by \cite{shervashidze2011weisfeiler}.
In fact, this property comes from the fact that provenance kernels are an explicit graph kernel, i.e., the feature vectors, and not only the dot products between each pair, are known (for other examples of explicit graph kernels, see \cite[p. 14]{nikolentzos2019survey}). We apply this definition to our graph in \fref{fig:mimic_example} as an example.

\begin{example}
    Consider the provenance graph $G$ depicted in \fref{fig:mimic_example}. In order to infer $\ftypevector^{\,1}(G)$, we need $G$'s $0$-types and $1$-types. There are three of the former and four of the latter, and we denote them by:
    \begin{align*}
        &F_1 = (\{\act\}), \qquad F_2  = (\{\ag\}), \qquad F_3 = (\{\ent\}), \\
        &F_4 = (\{\waw, \used\}, \{\act, \ag\}),\qquad F_5 = (\{\spec\}, \{\ent\}), \\
        &F_6 = (\{\wdf\}, \{\ent\}), \qquad
        F_7 = (\{\wgb, \wdf\}, \{\act, \ent\})
    \end{align*}
    And thus 
    \begin{equation}
        \ftypevector^{\,1}(G) = (5, 2, 2, 2, 1, 1, 2)
    \end{equation}
\end{example}
We now use the definition of a feature vector to define provenance kernels. 
\begin{definition}[Provenance Kernel]
    Given two graphs $G, G' \in \GG$ and $\ftype^i(\vv)$ for all $0 \leq s \leq {\depth}$, we define the kernel between $G$ and $G'$ as 
    \begin{equation}
        \fkernel^\depth(G, G') = \sum_{s=0}^\depth \ang{\ftypevector^s(G), \ftypevector^s(G')} 
    \end{equation}
    where $\ang{x,y}$ denotes the dot product between $x$ and $y$.
\end{definition}

\begin{proposition}
    Provenance kernels are p.s.d.
\end{proposition}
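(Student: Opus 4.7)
The plan is to argue directly from the definition using the two standard closure properties of positive semi-definite kernels: (i) any kernel of the form $k(x,y)=\langle \varphi(x),\varphi(y)\rangle$ for an explicit feature map $\varphi$ into a real inner product space is p.s.d., and (ii) sums of p.s.d.\ kernels are p.s.d. Since $\fkernel^\depth$ is already written as a sum of inner products of explicit feature vectors, the result will reduce to elementary linear algebra.

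First, I would fix a depth $s$ with $0\le s\le \depth$ and observe that, by Definition~\ref{def:feature_vector}, once the enumeration $\ftype^s(\vv)=\{\aftype_1,\dots,\aftype_{s_s}\}$ of all $s$-types occurring in the family $\GG$ is fixed, the map $G\mapsto \ftypevector^s(G)$ is a well-defined function from $\GG$ into the finite-dimensional Euclidean space $\real^{s_s}$. Taking $\hilbert_s=\real^{s_s}$ with its standard inner product and $\embeding_s=\ftypevector^s$, the component kernel $k_s(G,G')=\ang{\ftypevector^s(G),\ftypevector^s(G')}$ is therefore by construction the reproducing kernel of an explicit feature map, hence symmetric and p.s.d. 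Concretely, for any finite sample $\{G_1,\dots,G_t\}\subset\GG$ and scalars $c_1,\dots,c_t\in\real$,
\begin{equation}
\sum_{i,j} c_i c_j\, k_s(G_i,G_j)
= \Bigl\langle \sum_i c_i\,\ftypevector^s(G_i),\ \sum_j c_j\,\ftypevector^s(G_j)\Bigr\rangle
= \Bigl\|\sum_i c_i\,\ftypevector^s(G_i)\Bigr\|^2 \ge 0.
\end{equation}

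Second, I would sum over $s=0,\dots,\depth$. Because $\fkernel^\depth=\sum_{s=0}^{\depth} k_s$ is a finite sum of p.s.d.\ functions, symmetry and positive semi-definiteness transfer termwise: for any $\{G_1,\dots,G_t\}$ and $c_1,\dots,c_t$,
\begin{equation}
\sum_{i,j} c_i c_j\, \fkernel^\depth(G_i,G_j)
= \sum_{s=0}^{\depth}\sum_{i,j} c_i c_j\, k_s(G_i,G_j) \;\ge\; 0,
\end{equation}
and $\fkernel^\depth(G,G')=\fkernel^\depth(G',G)$ follows from the symmetry of each $k_s$. Invoking the characterisation recalled in the Preliminaries (Theorem~3 of Berlinet and Thomas-Agnan), this is exactly what is required for $\fkernel^\depth$ to be a valid kernel; equivalently, the direct-sum feature map $G\mapsto (\ftypevector^0(G),\dots,\ftypevector^\depth(G))\in \bigoplus_{s=0}^{\depth}\real^{s_s}$ reproduces $\fkernel^\depth$ as a single inner product.

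I do not anticipate any genuine obstacle: the feature vectors are finite-dimensional and explicit, so neither convergence nor measurability issues arise. The only point worth stating carefully is that the enumeration $\{\aftype_1,\dots,\aftype_{s_s}\}$ of provenance $s$-types must be fixed uniformly across all graphs in $\GG$ before defining the component kernel, so that the vectors being paired live in a common coordinate space; this is already built into Definition~\ref{def:feature_vector}, and once noted the proof is a two-line computation.
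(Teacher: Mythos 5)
Your argument is correct and is essentially the paper's own proof: for each fixed depth $s$ the Gram sum collapses by bilinearity to $\bigl\Vert\sum_i c_i\,\ftypevector^s(G_i)\bigr\Vert^2 \ge 0$, and positive semi-definiteness of $\fkernel^\depth$ then follows because a finite sum of non-negative terms is non-negative. Your additional remarks on symmetry and on fixing the enumeration of types uniformly across $\GG$ are sensible bookkeeping that the paper leaves implicit, but they do not change the route.
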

\begin{proof}
    Let $c_1, \dots, c_t \in \real$ and $G_1, \dots, G_t \in \GG$. Consider for a given depth $s$ and pair of indices $i,j$, the dot product $\ang{\ftypevector^s(G_i), \ftypevector^s(G_j)}$. Then, 
        $\sum_{i=1}^t\sum_{j=1}^t c_ic_j \ang{\ftypevector^s(G_i), \ftypevector^s(G_j)} = \\
        \ang{\sum_{i=1}^t c_i \ftypevector^s(G_i), \sum_{j=1}^t c_j\ftypevector^i(G_j)} \geq 0$.
    The inequality follows from the fact that both sums add to exactly the same value and from $\dotproduct{x,x} \geq 0$ for all $x$. Since sum of non-negative numbers is non-negative, $\fkernel^\depth$ is p.s.d.
\end{proof}

\section{Related Work}\label{sec:related_work}

Similar definitions of provenance types have been proposed as a tool for provenance graph summarisation \cite{Moreau2015, ipaw_library}. In both these approaches, the idea of the history of provenance nodes being related to a sequence of transformations described by edge labels is used. To the best of our knowledge, however, this is the first work to explore the use of provenance types in the context of machine learning methods.
When comparing the efficiency of the other summarisation methods with our own, we find that the definition by \cite{Moreau2015} requires an exponential time $\OO(nd^\depth)$, where $d$ is the maximum degree of nodes in an input graphs, and $n$ its number of nodes.
On the other hand, \cite{ipaw_library} propose a faster algorithm that takes $\OO(\depth m)$, where $m$ is the number of edges of an input graph. This is faster than our algorithm by a factor of $\depth$, which is typically small and does not depend on the size of the graph.
This faster algorithm, however, shares similarities with Weisfeiler-Lehman graph kernels to a point in which patterns with very close meaning in provenance are classified differently. More specifically, the algorithm does not inspect sub-trees beyond their first level in order to discard repetitions. This is discussed in detail later in this section and exemplified in \fref{fig:patterns}.

ML techniques 
on graphs 
have been proposed in the domain on provenance (e.g. .
Provenance Network Analytics (PNA) \cite{Huynh2018}, for example, creates, for each graph, a feature vector that encodes a sequence of different graph topological properties.
Some of these are provenance agnostic, such as the number of nodes, or the number of edges. Others, in contrast, record the longest shortest paths between, e.g., two provenance entities, or between an agent and an activity.
In Section~\ref{sec:evaluation}, we compare the performance of provenance kernels and PNA in the same classification tasks. For other works in provenance and ML see \cite{souza:lirmm-03324881} and \cite{miao2017towards}. 

Provenance kernels compare and classify graphs, as opposed to comparing and classifying nodes on graphs. The latter is often known as kernels \emph{on graphs}  (as opposed to \emph{graph kernels}), or \emph{graph embedding} techniques. A notable example of graph embedding technique is Struc2Vec \cite{struc2vec} is a learning technique based on the structural identity of nodes on graphs that embed nodes into a Euclidean space according to the structure of their neighbourhood. Similarly to provenance kernels, struc2vec has a hierarchical approach when looking at the structure of the neighbourhood of nodes. The main difference to our work is that Struc2Vec does not consider edge nor node labels, but instead the number of occurrences of neighbouring elements (and their degrees). Another commonality is that both works define a distance between nodes based on their neighbours and a suitable metric.
In the context of knowledge graphs, graph embedding for link prediction  was used in \cite{rosso2020www}.
The former work used implicit computations to compare a pair of graphs, whether the latter introduced explicit methods allowing for faster computation (for a survey on similar models, see \cite{ristoski2016semantic}). 
ML methods for Resource Description Framework (RDF) graphs have been studied by \cite{ristoski2016rdf2vec} (RDF2Vec), \cite{loschgraphkernelsrdf} and \cite{de2015substructure}.
Other known approaches that for example aim to find missing labels or links in graphs include NodeSketch \cite{yang2019nodesketch}, DeepWalk \cite{10.1145/2623330.2623732}, 
and Node2Vec \cite{10.1145/2939672.2939754}.

Using the classification from \cite{graph-kernels-challenges}, we can place provenance kernels in the \textit{Information Propagation} graph kernel family. More specifically, in the group of \textit{Kernels based on iterative label refinement}. That is because we iteratively update node labels up to $\depth$ times based on the other node labels. Other graph kernel methods in the same family include Neighbourhood Hash kernels (NH) \cite{Neighbourhood-Hash-Kernel}, Weisfeiler-Lehman kernels (WL) \cite{shervashidze2011weisfeiler}, and Neighborhood Subgraph Pairwise Distance kernel (NSPD) \cite{neighborhood-subgraph-pairwise}. Out of those, only the latter originally have taken into account the edge labels in addition to node labels in its classification. A possible extension for labelled edges, however, is mentioned in WL original work \cite{shervashidze2011weisfeiler} and made explicit by~\cite{de2013fast}.

On the other hand, the \wl graph kernel algorithm presented by~\cite{shervashidze2011weisfeiler} does not consider edge labels.
A close variant to this WL extension was also proposed by~\cite{ipaw_library}, with the differences that (1) repetitions of branches from the same given parent in its walk-tree are discarded and; (2) although all edge-labels were considered, only nodes at the leaves of trees had their labels taken in account. Another close variant of WL, that uses truncated trees as features, was proposed recently in \cite{8862853}.
Discarding repetitions, however, is key in reducing the sizes of feature vectors by putting together patterns that have a similar meaning in provenance. In this paper, we extend this process further by agglutinating even more similar patterns into the same provenance type. 
\fref{fig:patterns} shows two examples of such types.
In essence, the extra sequence of two derivations in Pattern $1$ does not add qualitative meaning to the set of transformations that lead to the creation of the root node. 
Thus, provenance kernels consider these two patterns as having the same type. One can see provenance types for provenance kernels as a \emph{flattened} version of the ones defined by~\cite{ipaw_library} since the idea of branches is hidden by the sole enumeration of labels at a given depth. 

\begin{figure}
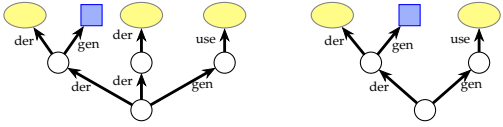

    \centering
  \begin{subfigure}[b]{0.23\textwidth}
    \begin{center}
        \wldiffnewone
    \end{center}
    \label{fig:1}
  \end{subfigure}
  \begin{subfigure}[b]{0.17\textwidth}
    \begin{center}
        \wldiffnewtwo
    \end{center}
    \label{fig:2}
  \end{subfigure}
  \caption{Two examples of different tree-patterns that have a similar meaning in provenance. 
  Yellow circles denote entities, blue squares denote activities, and plain circles denote nodes of which types are not captured by the algorithm. Provenance kernels classify both these tree-patterns as the same $2$-type, i.e., $(\{\wdf, \wgb\}, \{\wdf, \wgb, \used\},\{\act, \ent\}) $.}
  \label{fig:patterns}
\end{figure}

In terms of theoretical computational complexity, provenance kernels are situated in the efficient spectrum. Note that provenance kernel's computational complexity for one graph with $m$ edges, $\OO(\depth ^2 m)$, is bounded by $\OO(\depth ^2 n d)$, where $d$ is the maximum (out) degree and $n$ is the number of nodes of this graph. Also, $m = nd$ if and only if the input graph is regular.
Graphlet Sampling kernel (GS)~\cite{graphlet} counts the number of small subgraphs present in each input graph.
These small subgraphs typically have size $k \in \{3,4,5\}$. The original time required to run this kernel, $\OO(n^k)$, is prohibitively expensive.
Later, \cite{shervashidze2009efficient} improved this bound to $\mathcal{O}(nd^{k-1})$, where $d$ is the maximum degree of nodes in an input graph.
Neighbourhood Hash kernel (NH)~\cite{Neighbourhood-Hash-Kernel} compares graphs by counting the number of common node labels, which are updated with the employment of logical operations that take in account the label of neighbouring nodes.
These operations do not use the original categorical node label, but a binary array so XOR operations can be employed. The complexity of this kernel is bounded by $\mathcal{O}(bhn\bar{D})$, where $\bar{D}$ is the average degree of the vertices, $h$ is how many times the update is executed, and $b$ is the number of bit labels.
We need $2^b - 1>> |\Sigma|$, size of the label set. 
Finally, Neighborhood Subgraph Pairwise Distance kernel (NSPD)~\cite{neighborhood-subgraph-pairwise} compares subgraphs $S$ in the neighbourhood of nodes up to a distance $r$ for all pair of nodes in a given graph.
The complexity, $\mathcal{O}(n |S| |E(S)| \log |E(S)|)$, is such that $S$ is the subgraph induced by the neighbourhood of a vertex and radius $r$. $E(S)$ is the number of edges of $S$.

We can see that some of the theoretical gaps on running times between provenance kernels and the others discussed above come down to low-impact variables that may even be treated as constants, such as the number of bit labels or the size of graphlets. Others, such as maximum degrees and number of edges of neighbourhood subgraphs, are more dependent on input graphs and tend to have more impact on running times, which indicate that provenance kernels are computed faster.
These differences are reflected in the empirical running times evaluated in Section~\ref{sec:evaluation}, in which we show that provenance kernels outperform the three methods described above (GS, NH, and NSPD) in terms of efficiency.
Moreover, none of these is designed to take as input the edge categorical labels of graphs.
Subgraph Matching kernel, on the other hand, does consider both edge and node labels as it counts the number of common subgraphs of bounded size $k$ between two graphs.
It requires, however, an impractical computational running time of $\OO(kn^{k-1})$, where $n$ this time stands for the sum of sizes of the two graphs being compared.
In fact, this kernel timed out in our experiments and its accuracy could not be measured.

\begin{table*}
    \centering
    \caption{The graph kernels evaluated in Section~\ref{sec:evaluation} and their properties: whether node and edge categorical labels are considered and their theoretical computational complexity. Symbol $\approx$ indicates that the need of labels depend on the choice of base kernel.}
    \begin{tabular}{@{}cccc@{}}
    \toprule
         Kernel Method  & Node label & Edge label & Complexity \\ 
        \midrule
         Provenance Kernels (PK) & \cmark & \cmark & $\mathcal{O}(\depth^2 m)$ \\
         Shortest Path (SP) & \cmark & \xmark & $\mathcal{O}(n^4)$ \\ 
         Vertex Histogram (VH) & \cmark & \xmark & $\mathcal{O}(n)$\\
         Edge Histogram (EH) & \xmark & \cmark & $\mathcal{O}(m)$\\
         Graphlet Sampling (GS) & \xmark & \xmark & $\mathcal{O}(nd^{k-1})$ \\
         Hadamard Code (HC) & \cmark & $\approx$ & $\mathcal{O}(hm)$\\
         Weisfeiler-Lehman (WL)  & \cmark & $\approx$ & $\mathcal{O}(\depth m)$ \\
         Neighbourhood Hash (NH) & \cmark & \xmark & $\mathcal{O} (hm)$ \\ 
         Neigh. Subgraph P. Dist. (NSPD)  & \cmark & \cmark & $\mathcal{O}(n |S| |E(S)| \log |E(S)|)$ \\
         \bottomrule
    \end{tabular}
    \label{tab:other_kernels}
\end{table*}

\tref{tab:other_kernels} presents a summary of all graph kernels to be compared to provenance kernels in Section~\ref{sec:evaluation}.
In this table, we note whether node or edge categorical labels were taken into account in the implementation we used, as well as their theoretical complexity.
We now briefly discuss the remaining kernel methods.
The Shortest Path kernel (SP)~\cite{Borgwardt2005}, for each graph, constructs a new graph that captures the original graph's shortest paths and then uses a base kernel to compare two shortest path graph's.
The complexity of SP is $\mathcal{O}(n^4)$. We use the algorithm of Vertex Histogram (VH) and Edge Histogram (EH) kernels as presented by \cite{sugiyama2015halting}. VH creates, for each graph, a feature vector that captures the number of nodes with each given node label $\lab$, whereas EH does the analogous for edge labels. Their computational complexities are $\mathcal{O}(n)$ for VH and  $\mathcal{O}(m)$ for EH. Note that provenance kernels of depth $0$ coincide with VH.
Hadamard Code kernel (HC)~\cite{kataoka2016hadamard} is similar to NH, even by showing the same computational complexity.
It explores the neighbourhood of nodes iteratively for different levels (or depths). Its name come from the use of Hadarmard code matrices.

Finally, when compared against the PNA method, provenance kernels outperform in terms of theoretical computational complexity. One of the features considered in the PNA method is the longest shortest path between two nodes of a given label (two entities, or one entity and one activity, and so on). This gives us a lower bound for PNA's computational complexity of $\Omega(nm)$.

\section{Empirical Evaluation}\label{sec:evaluation}

As a tool for analysing provenance graphs, provenance kernels are suitable for
ML techniques over provenance data. To demonstrate the approach, we
employ provenance kernels in classification tasks on six provenance
data sets (see below). In our evaluation, we compare the
accuracy of provenance kernels against generic graph kernels and the PNA method
(discussed in Section~\ref{sec:related_work}) in the same classification tasks.
We describe the evaluation methodology in Section~\ref{sub:methodology} and
report the evaluation's results in Section~\ref{sub:results}.

\subsection{Data sets}\label{sub:datasets}

We employed six provenance data sets in our evaluation; they were produced by
three different applications: MIMIC~\cite{MIMIC-III},
CollabMap~\cite{Ramchurn2013}, and a \pg\ simulator. These applications cover a
spectrum of human and computational processes. The first, MIMIC, records solely
human activity; the second, CollabMap, is created with computational workflows
driven by human inputs, whereas \pg\ is a fully synthetic system.

\subsubsection{MIMIC Application}\label{ssub:mimic}

MIMIC-III~\cite{MIMIC-III} is an openly available data set comprising
de-identified health data associated with over 53,000 intensive care unit
admissions at a hospital in the United States. It contains details collected
from hospital stays of over 30,000 patients, including their vital signs and
medical measurements, their diagnostics, the procedures carried out on them and
by whom, etc. In this application, we use the data from MIMIC-III to
reconstruct a patient's journey through the hospital in a provenance graph (see
\fref{fig:mimic_example} for an example). Each admission starts with the patient
being admitted to a unit at the hospital and followed by transfers from one
unit to another. The patient before and after a stay in the care of a ward/unit
are modelled as two separate entities in the provenance graph, the latter
derives from the former as a result of the corresponding `treatment' activity
associated with the respective ward/unit. In addition, each \emph{procedure}
(e.g.\ inserting peripheral lines, imaging, ventilation) that was carried out on
the patient is similarly modelled as an activity with two entities to represent
the patient before and after the procedure. Both types of activities can
happen in parallel. We also annotate procedure activities with their procedure
types, e.g.~\nodename{process:225469}, specifying what the procedures are. There
are 116 different types of procedures recorded in the data set; out of those, 8
types are in the ``Communication'' category, e.g.~meeting the family, notifying
the family. Since those do not clinically affect a patient, we ignore them in
our analyses, leaving 108 procedure types recorded in the provenance graphs we
produced.

Approximately 10\% of admissions resulted in in-hospital
mortality. For each hospital admission, we associate its provenance graph with a
\codeformat{dead} label, which has either a value of 0 or 1, with 1 representing
in-hospital mortality. We then aim to predict the mortality result from a
patient's journey during admission by applying the provenance kernels over
its provenance graphs. This set of graphs is now called \codeformat{MIMIC}.

\subsubsection{CollabMap Application}\label{ssub:collabmap}

CollabMap~\cite{Ramchurn2013} is a crowdsourcing platform for constructing
evacuation maps for urban areas. In these maps, evacuation routes connect exits
of buildings to the nearby road network. Such routes need to avoid obstacles that
are not explicit in existing maps (e.g.\ walls or fences).
CollabMap crowdsources the drawing of such evacuation routes from the
public by providing them aerial imagery and ground-level panoramic views of an
interested area. It allows non-experts to perform tasks without them needing
expertise other than drawing lines on an image. The task of identifying all
evacuation routes for a building was broken into micro-tasks performed by
different contributors: building identification (outline a \emph{building}),
building verification (vote for the building's correctness), route identification
(draw an evacuation \emph{route}), route verification (vote for the correctness
of routes), and completion verification (vote on the completeness of the current
\emph{route set}). This setup allows individual contributors to verify each
other's contributions (i.e.\ buildings, routes, and route sets).

In order to support auditing the quality of its data, the provenance of crowd
activities in CollabMap was fully recorded: the data entities that were shown to
users in each micro-task, the new entities generated afterwards, and their
dependencies. The provenance graphs from CollabMap are, therefore, recorded the
actual activities as they occurred and are not reproduced after the fact
(see~\cite{Huynh2018} for more details).
A notable point of this data set is that the provenance graph associated with a
data entity does not describe its history but the following, later entities and 
activities that depended on it. Hence, given an entity, such a graph was called
a (provenance) \textdef{dependency graph} of that entity~\cite{Huynh2018}, which
is analogous to a graph detailing the citations of an academic paper, their
later citations, and so on. More formally, the dependency graph of a node \(v\)
in a provenance graph $G$ is the sub-graph of $G$ induced by all nodes from
which it is possible to reach $v$, i.e., the sub-graph induced by the set of
nodes $S = \{u \mid \text{there exists a walk from } u \text{ to } v\} $.

In 2012, CollabMap was deployed to help map the area around the Fawley Oil
refinery in the United Kingdom. It generated descriptions for 5,175 buildings,
4,997 routes, and 4,710 route sets. In this application, we aim to predict the
quality of CollabMap data entities from their provenance dependency graphs,
i.e.~whether a building, route, or route set is sufficiently trustworthy to be
included in the final evacuation map. The sets of provenance dependency graphs
for CollabMap buildings, routes, and route sets are provided
by~\cite{Huynh2018} along with their corresponding \codeformat{trusted} or
\codeformat{uncertain} labels; they are named as \codeformat{CM-B},
\codeformat{CM-R}, and \codeformat{CM-RS}, respectively.

\subsubsection{\pg\ Simulator}\label{ssub:pokemongo}

\pg\ is a location-based augmented reality mobile game in which players, via a
mobile app, search, capture, and collect \pokemon\ that virtually spawn at
geo-locations around them~\cite{Paavilainen2017}. In this application, we
simulated part of the game's mechanics using NetLogo~\cite{Tisue2004}, a
multi-agent programmable modelling environment. It supports the concept of mobile
\emph{turtles} on a square grid of stationary \emph{patches}. Each turtle,
therefore, is located on a patch, essentially a 2-dimensional coordinate. The
turtles have individual states and a set of primitive operations available,
including rotating and moving. The simulator has turtles which represent the
geolocated \pokemons\ and \pokestops. These, however, do not move, and the
\pokemon\ are spawned only for a period of time. Other turtles represent the
players, which are assigned randomly to one of the three teams in the \pg\ game:
\textlabel{Valor}, \textlabel{Mystic}, and \textlabel{Instinct}; players move 
around to visit the \pokestops\ and to capture \pokemons. Simulation parameters 
include the initial number of \pokemons, the number of \pokestops, the number of 
players, and the maximum number of \pokemons\ a player can store.

During a simulation, if a player runs out of balls, it moves toward the closest
\pokestop\ and collects a random number of balls when
arriving there. Otherwise, the player chooses a \pokemon\ as a target, moves
toward it, and tries to capture it by ``throwing'' a ball at it.
However, if the player's \pokemon\ storage is full, it first has to dispose of
one of the \pokemons\ in storage before attempting a throw. A random number less
than 3,500 is generated in each throw: if it is larger than the \pokemon's
strength, the \pokemon\ is ``captured'', then put into the player's storage and
removed from the simulation. After each throw, successful or otherwise, the ball
is ``consumed'' and the player has one less.
During each simulation, we record game activities (collecting balls, capturing
and disposing of \pokemons) as provenance data.

We introduce into the simulation different strategies for each team on how
its players choose a \pokemon\ to \textbf{target} and to \textbf{dispose} of
when they need to. Players of the \textlabel{Valor} team always target for the
\emph{strongest} \pokemon, the \textlabel{Mystic} team the \emph{weakest}, and
the \textlabel{Instinct} team the \emph{closest}. These team-specific targeting
behaviours can be switched on/off in the simulation; when this is off, all
players target the \pokemon\ closest to them.
When space is needed in the \pokemon\ storage, players of the
\textlabel{Instinct} team dispose of the \emph{weakest} \pokemon, the
\textlabel{Mystic} team the \emph{earliest captured}, while the
\textlabel{Valor} team never disposes of a \pokemon.

We run two sets of 40 \pg\ simulations, with 30 players in each simulation. In
the first set, each team follows their individual \emph{targeting} strategy
while do not dispose of any \pokemon; in the second, all the teams target the
closest \pokemon\ while following their individual \emph{disposal} strategy
above. From the two experiments, we have two sets of 1,200 provenance
graphs; we call the first \codeformat{PG-T} and the second \codeformat{PG-D}.
Each of those graphs details the in-game actions taken by a particular
(simulated) player and is labelled with the player's team name (i.e.
\textlabel{Valor}, \textlabel{Mystic}, or \textlabel{Instinct}).

\subsection{Methodology}\label{sub:methodology}

For each classification task, in order to ensure the robust evaluation of
provenance kernels' performance compared to that of existing graph kernels and
the PNA method, we carry out the following: balancing the input data set (if
unbalanced), training classifiers with provenance kernels (PK), generic graph
kernels, and provenance network metrics, measuring the performance of each
classifier, and comparing their performance. An overview of the full evaluation
pipeline, implemented in Python, is depicted in \fref{fig:pipeline}.

\begin{figure*}
  \centering
  \includegraphics[width=0.6\textwidth]{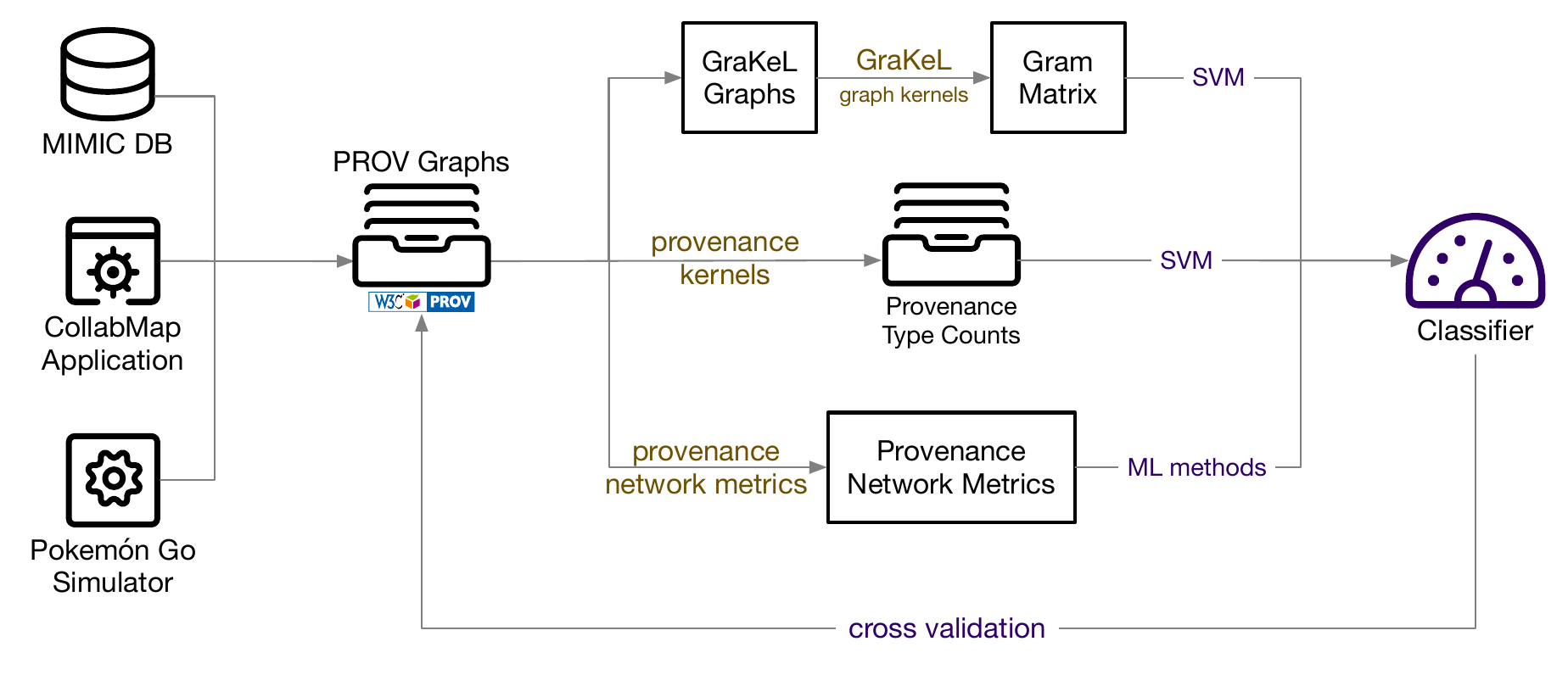}
  \caption{Overview of the evaluation pipeline. Six labelled provenance data sets from three applications are used to build classifiers using provenance kernels, generic graph kernels (from the GraKeL library), and provenance network metrics (PNA). Repeated 10-fold cross-validation is carried out to measure the classification accuracy of each method. We also measure the time each method takes to produce their kernels/network metrics (i.e.\ the yellow step).}\label{fig:pipeline}
\end{figure*}

\vspace{2pt}

\noindent \textbf{Data balancing} The MIMIC and CollabMap data sets are significantly
skewed, being originated from real-world human activities. Therefore, for those
data sets, we balance the number of samples in each class by selecting all the
samples in the minority class and randomly under-sampling the majority class to
produce a balanced data set. \tref{tab:datasets-overview} shows the number of
samples used in each classification task after balancing.

\noindent \textbf{Classification methods} In addition to building classifiers
for a classification task in question using provenance kernels,
we also build classifiers using existing graph kernels, implemented by
the Grakel library~\cite{Siglidis2020}, and the provenance network metrics
as proposed in the PNA method~\cite{Huynh2018}.
\begin{itemize}

\item Provenance Kernels: A provenance kernel is built on provenance types of
depth up to a specified level \(\depth \) which may include (1) \emph{only} the
PROV generic (application-agnostic) types, i.e. \ent, \act, and \ag, or (2)
application-specific types (such as the \qname{mimic:Patient} and
\qname{mimic:Ward} types shown in \fref{fig:mimic_example}) \emph{in addition
to} the PROV generic types. We test both provenance kernels using only generic
types and those including application types in our evaluation; we call the
former group \textlabel{PK-G} and the latter \textlabel{PK-A}. We also evaluate
provenance kernels for different levels of \(\depth \), from \(0\) to \(5\).
Hence, the methods we test in these two groups are: \textlabel{G0},
\textlabel{G1}, \ldots, \textlabel{G5}, \textlabel{A0}, \ldots, \textlabel{A5};
the first letter in their names denotes whether they use only PROV generic types
(\textlabel{G}) or not (\textlabel{A}) and the second denotes the specified
level \(\depth \); twelve PK-based methods are tested in total.

\item Graph Kernels: The graph kernels we test are Shortest Path (SP)~\cite{Borgwardt2005}, Vertex Histogram (VH)~\cite{sugiyama2015halting}, Edge
Histogram (EH)~\cite{sugiyama2015halting}, Graphlet Sampling (GS)~\cite{shervashidze2009efficient}, Weisfeiler-Lehman (WL)~\cite{shervashidze2011weisfeiler}, Weisfeiler-Lehman Optimal Assignment (WL-OA)~\cite{Kriege2016}, Hadamard Code (HC)~\cite{kataoka2016hadamard}, Ordered Dag Decomposition (ODD)~\cite{ODD_kernel}, Neighbourhood Hash (NH)~\cite{Neighbourhood-Hash-Kernel}, Neighbourhood Subgraph Pairwise Distance
(NSPD)~\cite{neighborhood-subgraph-pairwise}.\negmedspace\footnote{We also
tested other kernels provided by the Grakel library. However, they
either timed out or produced errors when processing graphs in our data sets and,
hence, are not included in our evaluation.} Similar to provenance
kernels, Weisfeiler-Lehman and Hadamard Code kernels can be computed up to a
specified level \(\depth \); we test those kernels with \(\depth \in [1, 5]\).
Hence, a total of 16 graph kernels are tested. As shown in \fref{fig:pipeline},
support-vector machines (SVM) are used to build classifiers from both provenance
kernels and generic graph kernels.
Among the tested graph kernels, the SP, GS, ODD, NH, NSPD, WL-OA kernels take a 
significantly longer time to run compared to the rest. We, therefore, for 
comparison purposes, put these methods in a group called \textlabel{GK-slow} and 
the remaining kernels in \textlabel{GK-fast}.

\item Provenance Network Analytics: The PNA method proposes calculating 22 
network metrics for each provenance graph and using those as the feature vector
for that graph. Such feature vectors can be readily taken as inputs by a variety
of ML algorithms. Since we are uncertain which algorithm works best with
provenance network metrics, we test the following classification algorithms over
the metrics: Decision Tree (DT), Random Forest (RF), K-Neighbour (KN), Gaussian
Naive Bayes (NB), Multi-layer Perceptron neural network (NN), and Support Vector
Machines (SVM). Hence, six methods are tested in total, all are implementations
by the Scikit-learn library~\cite{scikit-learn} using its default parameters
for them. This group of classifiers, which rely on provenance network metrics,
is called \textlabel{PNA}.

\end{itemize}

For any of the above methods that rely on SVM, its \(C\) parameter was optimally 
chosen from a grid search (with an inner cross-validation process) over the 
following  values \(\{10^{-4}, 0.001, 0.01, 0.1, 1.0, 10, 100, 1000 \}\) to give 
the best accuracy.

\begin{table*}
\caption{The classification tasks, the number of samples picked from each data set, and the number of application types present in each data set (in addition to PROV generic types).}\label{tab:datasets-overview}
\centering
\begin{tabular}{@{}r|c|ccc|cc@{}}
\toprule
Data set: & \codeformat{MIMIC} & \codeformat{CM-B} & \codeformat{CM-R} & \codeformat{CM-RS} & \codeformat{PG-T} & \codeformat{PG-D} \\ \midrule
Classification labels: & \codeformat{0}/\codeformat{1} & \multicolumn{3}{c}{\codeformat{trusted}/\codeformat{uncertain}} & \multicolumn{2}{c}{\codeformat{Valor}/\codeformat{Mystic}/\codeformat{Instinct}} \\
Random baseline: & 50\% & 50\% & 50\% & 50\% & 33\% & 33\% \\
Sample size: & 4,586 & 1,368 & 2,178 & 3,382 & 1,200 & 1,200 \\
No.\ application types: & 120 & 8 & 8 & 8 & 8 & 8 \\
\bottomrule
\end{tabular}
\end{table*}

\noindent \textbf{Performance metrics} The performance of each method is measured
by its accuracy in predicting the correct label of a provenance graph (i.e.\ the
number of correct prediction over the total number of samples), which is
provided with the above data sets.
To robustly measure the performance, 10-fold cross-validation is employed.
In particular, with all the available provenance graphs randomly split into 10
equal subsets, we perform 10 rounds of
learning; on each round, a \(\nicefrac{1}{10}\) subset is held out as the test
set and the remaining are used as training data.
This process is repeated 10 times; hence, 100 measures of accuracy are collected for each method per experiment.
In addition, to understand the computation cost of each method, we
measure the time it takes to produce provenance kernels, graph kernels, and
provenance network metrics (the yellow step in \fref{fig:pipeline}) given the
same data set used in a classification task.
The time measurements do not include the time taken in training the classifiers nor the time preparing the input GraKeL graphs.

\noindent \textbf{Comparing performance} Due to the large number of methods 
evaluated from the five groups (i.e.\ \textlabel{PK-G}, \textlabel{PK-A},
\textlabel{GK-slow}, \textlabel{GK-fast}, \textlabel{PNA}), we report here only
one best-performing method from each group, i.e.\ the one with the highest mean
classification accuracy within its group. We then compare the mean accuracy of
the best-performing PK-based methods (i.e.\ \textlabel{PK-G}, \textlabel{PK-A})
against those from the remaining three groups to establish whether PK-based
methods offer improved accuracy in the six classification tasks over existing
graph kernel and PNA methods. In order to ensure that our comparison results are
statistically significant, we carry out the Wilcoxon–Mann–Whitney ranks
test~\cite[Ch. 10]{Degroot2012}, also known as the Wilcoxon rank-sum test, when
comparing the accuracy measurements of two methods. If the test produces a
\emph{p}-value that is less than 0.05, we reject the null hypothesis that states
that the accuracy measurements are from the same distribution, i.e\ one method
performs statistically better than the other. Otherwise, both methods are
considered to have a similar level of performance. In addition, in real terms,
we disregard accuracy differences of less than 2\% and consider the
corresponding methods to have a similar level of performance.

\subsection{Evaluation Results}\label{sub:results}

In this section, we report the performance of provenance kernels
(\textlabel{PK-G} and \textlabel{PK-A}) compared to that of existing generic
graph kernels (\textlabel{GK-slow} and \textlabel{GK-fast}) and the PNA method
(\textlabel{PNA}) across the classification tasks corresponding to the six
provenance data sets (\codeformat{MIMIC}, \codeformat{CM-B}, \codeformat{CM-R},
\codeformat{CM-RS}, \codeformat{PG-T}, and \codeformat{PG-D}). As previously
mentioned, for brevity, we only report the best-performing method in each group
in terms of their mean classification accuracy.

\begin{table}
\caption{Within each data set, we report the time cost of the best-performing method (shown in parentheses) from each comparison group relative to the time taken by the best-performing PK method (whose time cost shown as 1.0).} \label{tab:timings}
\centering
\begin{tabular}{@{}rlllll@{}}
\toprule
 & \textlabel{PK-G} & \textlabel{PK-A} & \textlabel{GK-fast} & \textlabel{GK-slow} & \textlabel{PNA} \\ \midrule
\codeformat{MIMIC} & 2.6 (G5) & 1.0 (A0) & 0.9 (WL2) & 201 (GS)   & 266 (SVM) \\
\codeformat{CM-B}  & 1.0 (G4) & 0.6 (A1) & 0.6 (HC3) &  10 (NSPD) &  90 (RF) \\
\codeformat{CM-R}  & 0.8 (G3) & 1.0 (A5) & 0.7 (WL5) &  53 (WLO4) & 107 (DT) \\
\codeformat{CM-RS} & 1.0 (G5) & 1.0 (A4) & 0.6 (WL3) &  88 (WLO5) & 111 (DT) \\
\codeformat{PG-T}  & 0.9 (G3) & 1.0 (A3) & 0.6 (WL5) &   5 (ODD)  & 113 (DT) \\
\codeformat{PG-D}  & 1.5 (G5) & 1.0 (A2) & 0.6 (WL5) &   5 (SP)   & 193 (SVM) \\ \bottomrule
\end{tabular}
\end{table}

\noindent\textbf{Computational costs}
Before delving into the performance of the five comparison groups, it is
pertinent to have an idea of the time costs incurred by them. Within each
classification task, using the time cost of the best-performing provenance
kernel as the relative time unit (i.e.\ 1.0), \tref{tab:timings} shows the relative time
costs of the best-performing method from each comparison group as multiples of
the chosen time unit. 

Formally, let \(\textlabel{X} \in \{\textlabel{PK-G}, \textlabel{PK-A}, \textlabel{GK-fast}, \textlabel{GK-slow}, \textlabel{PNA}\}\).
The entry for a given data set associated to the comparison group $\textlabel{X}$ is given by
$(\text{time-of-best-in-\textlabel{X}})$ divided by $(\text{time-of-best-in-\textlabel{PK}})$,
where `$\text{time-of-best-in-\textlabel{X}}$' stands for the time cost of the most accurate method in $\textlabel{X}$, whereas 
`$\text{time-of-best-in-\textlabel{PK}}$' stands for the time taken by the most accurate method in $\textlabel{PK} \in \{\textlabel{PK-G}, \textlabel{PK-A}\}$.

We also plot those time costs in \fref{fig:timings} using the logarithmic scale
to highlight their differences. Across the data sets, the \textlabel{PK-G} and
\textlabel{PK-A} methods take somewhat a similar time to produce the provenance
kernels from the same set of provenance graphs. Their differences are mainly due
to the different $\depth$ levels (of type propagation). We observe more
variation in relative time costs of the \textlabel{GK-fast} group's methods, but
they stay in the same magnitude of scale.
The best-performing graph kernels in the
\textlabel{GK-slow} group, however, take between 5x to 200x longer than the
baseline PK method to compute. The PNA methods are the slowest, taking 90x to
266x longer (to compute the provenance network metrics for the same set of
provenance graphs). Understanding the computational cost of each method, in
addition to its classification performance, will be useful when deciding whether
it is suitable for a given classification task.

\begin{figure}
  \centering
  \includegraphics[height=1.8in]{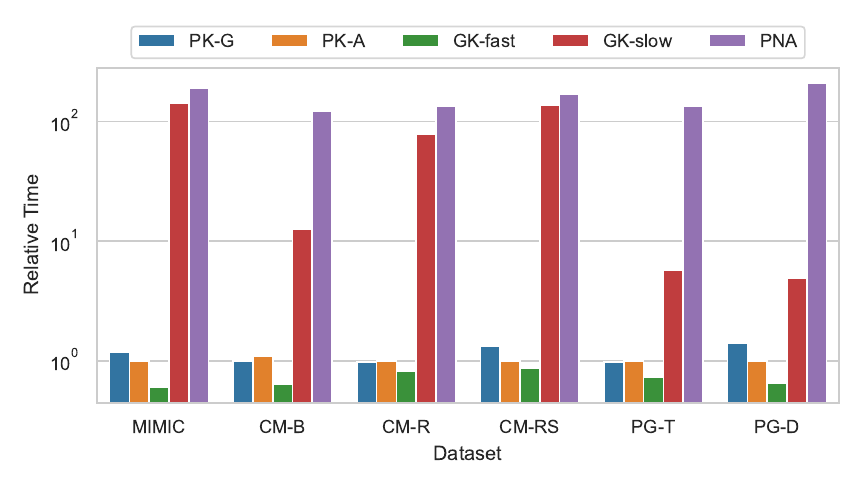}
  \caption{The relative time costs of the best-performing methods reported in \tref{tab:timings} plotted on the log scale.}\label{fig:timings}
\end{figure}

\begin{figure}
  \centering
  \includegraphics[height=1.8in]{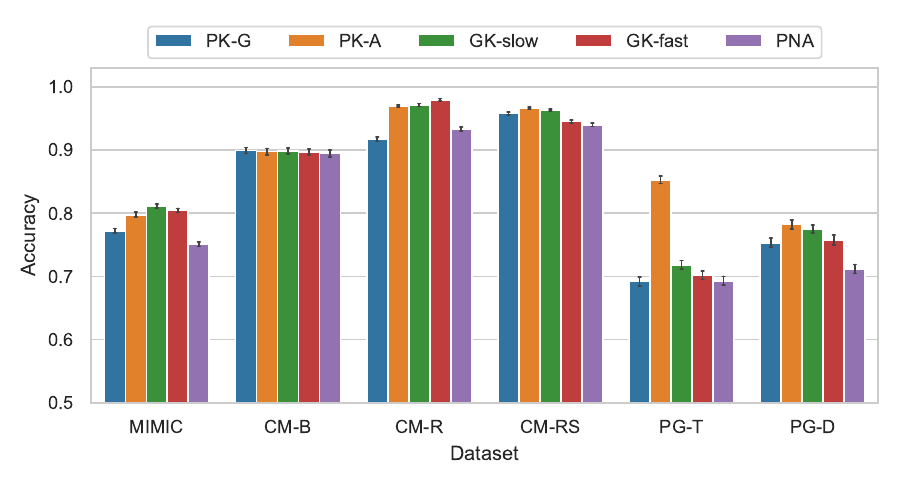}
  \caption{The mean classification accuracy of the best-performing provenance kernels, generic graph kernels, and PNA methods across the six classification tasks. The error bars show the 95-percent confidence intervals.}\label{fig:bestperformers}
\end{figure}

\noindent\textbf{Classification accuracy}
\fref{fig:bestperformers} plots the mean accuracy of the best-performing
provenance kernels compared against that of the best-performing graph kernels
and the best-performing PNA methods in the six classification
tasks.
(see \tref{tab:timings} for the identifier of the
best-performing method in each group, shown there in parentheses). 
At first
glance, in each task, the accuracy levels attained by the five comparison groups
are broadly close and significantly above the random baseline. This shows that
the proposed provenance types employed by PK-based methods, the graph
information relied on by GK methods, and the provenance network metrics used in
PNA methods can all serve effectively as predictors for these classification
tasks. However, their contributions to the accuracy of the corresponding
classifiers vary. To account for statistical variations, we carry out the
Wilcoxon–Mann–Whitney ranks tests to compare the accuracy level of the best
PK-based method with that of another comparison group in each classification
task. \tref{tab:summary} presents the results of those tests where an ``='' sign
indicates that the difference in accuracy is either less than 2\% or is not 
statistically significant; otherwise, a positive/negative value represents the 
accuracy gain/loss attained by the best PK-based method compared to the best 
method from the corresponding comparison group.

\noindent\textbf{Comparison with Graph Kernels}
If computational/time cost is not a consideration, we find that the
\textlabel{GK-slow} group generally outperforms the \textlabel{GK-fast} group
(see green bars vs.\ red bars in \fref{fig:bestperformers}). Compared to the
\textlabel{GK-slow} group, PK-based methods, however, yield similar levels of
accuracy across the tested classification tasks with the exception of
\codeformat{MIMIC} where the Graphlet Sampling kernel yields 2\% more accurate
classifications (at 200x more time cost) and \codeformat{PG-T} where the best
PK-based method is 14\% more accurate than the best \textlabel{GK-slow} method.
In terms of computation costs, it should be
noted that the best \textlabel{GK-slow} methods take 5x to 200x more
time
than their PK-based counterparts (see \tref{tab:timings}).
Compared to the \textlabel{GK-fast} group, \tref{tab:summary} shows that
PK-based methods are more accurate in two out of six classification tasks and
perform similar the remaining four tasks.
Hence, under time constraints (that disqualifies graph kernels in the
\textlabel{GK-slow} group), the proposed provenance kernels overall outperform 
the tested graph kernels in the \textlabel{GK-fast} group.

\begin{table}
\caption{Summary of the accuracy differences between the best-performing PK-based method and the best-performing method in the \textlabel{GK-slow}, \textlabel{GK-fast}, and \textlabel{PNA} groups. An ``='' sign means the accuracy difference is not significant; while a positive/negative value shows how much the PK-based method outperforms/under-performs the corresponding GK/PNA method, respectively.}
\label{tab:summary}
\centering
\begin{tabular}{@{}rcccccc@{}}
\toprule
Data set: & \codeformat{MIMIC} & \codeformat{CM-B} & \codeformat{CM-R} & \codeformat{CM-RS} & \codeformat{PG-T} & \codeformat{PG-D} \\ \midrule
\textlabel{GK-slow} & -2\% & =    & =    & =    & +14\% & = \\
\textlabel{GK-fast} & =    & =    & =    & =    & +15\% & +3\% \\
\textlabel{PNA}     & +3\% & =    & +3\% & +2\% & +16\% & +7\% \\ \bottomrule
\end{tabular}
\end{table}

\noindent\textbf{Comparison with PNA methods}
We also report in \tref{tab:summary} the accuracy differences between the best
PK-based methods compared to their PNA counterparts across the six
classification tasks. It shows that PK-based methods outperform in all tasks
with the exception of \codeformat{CM-B} where both groups perform at a similar
level of accuracy.
At the same time, given the significant penalty in computation cost incurred by
PNA methods in calculating network metrics (90--266 times, see
\tref{tab:timings}), PK-based methods prove to be better candidates for
analysing provenance graphs.
Moreover, the PNA method, in some of the tasks, employs obscure metrics
like the average clustering coefficients predominantly in the trained decision models,
making it a challenge to understand why certain classification is decided, even
with an interpretable model such as a decision tree classifier. In the following 
section, we show how provenance types used by the proposed provenance kernels
can afford us better interpretability 
compared to the network metrics employed by PNA methods.

\vspace{-0.3cm}
\section{Explainability}\label{sec:explainability}

In the evaluation of provenance kernels (Section~\ref{sec:evaluation}), Support Vector Machines are used to learn from the kernels' feature vectors, which are counts of provenance types, and perform classification tasks.
Techniques such as SVMs, however, are known as black-box models when considering their ability to provide explanations of their predictions.
In this section, we use LIME~\cite{lime}, short for Local Interpretable Model-Agnostic Explanations, to illustrate how it can help identify provenance types that are most influential in classification decisions and, from those, gain insights into classifiers built on provenance kernels. 

LIME aims to explain decisions of any classifier by introducing local perturbations of input data, through which it learns a linear model which is locally faithful to the instance to be explained.
For example, for tabular input data, it changes the value of a given feature, tests the perturbed feature vector with the same classifier, and checks whether such a change affects the prediction probabilities. 
The steeper the decrease in the prediction probability, the higher the contribution of that feature towards to a specific prediction.

Although provenance types alone do not explain the entirety of a process, they provide practitioners with a tool to better understand the decision process of graph classifiers.
We outline the steps to explain classification decisions using provenance kernels as follows. 
\begin{questions}
    \item\label{e1} \textbf{Important Types Identification:} Provenance kernel is an explicit graph kernel, i.e., we are able to inspect the feature vector used in the classification of each given graph. This, in turn, provides us with the set of provenance types that were present and used in such predictions. From the explanations for each instance produced by LIME, we are able to aggregate the importance of each feature across the entire data set and identify which provenance types were most influential in a given classification task. 

    \item\label{e4} \textbf{Instance Retrievability:} Once provenance types of interest are identified as being associated with a certain prediction, we are then able to retrieve original graph instances that contain patterns defined by one of the identified types. This is done by the subtasks (1) \textit{Graph instance identification}: searches which graph has a given feature in its feature vector; and (2) \textit{Subgraph retrieval}: extracts a subgraph that matches a provenance type in a given graph instance. Both combined may give us further insights into why a prediction was made as well as highlighting all provenance graphs that share the same pattern. 
    \item\label{e5} \textbf{Instance Description:} Once an instance of an interested pattern is found, we paraphrase such instance in a natural language by leveraging the descriptive capabilities of provenance vocabularies. 
\end{questions}
 
We illustrate the above steps with the \codeformat{CM-B} graphs.
First, we split it into a train set and a test set (at a ratio of 8:2).
We then train an SVM classifier on the train set.
For each graph in the test set, we record the importance of each feature from  LIME-generated explanations. 
The explanations were set up in a way such that the number of occurrences of each type in a particular graph was split into intervals, acting as a new (binary) feature.\footnote{The number of occurrences of a given type is denoted FA$x$\_$y$, where $x$ is the type's depth, and $x\_y$ is a type of depth $x$ (\ref{e1}).
Note that FA$x$\_$y$ are therefore non-negative integers.
Also, $x \leq 2$ since we are considering vectors in the context of types which depth is up to $2$.} For example, in Table \ref{tab:feature-importances}, we highlight the importance of features $\text{FA1\_1} \leq 2$, which indicates that the type $\text{FA1\_1}$ (see \tref{tab:decision-tree-type-definitions}) occurring $0$, $1$ or $2$ times a in single graph associates this graph as being considered \codeformat{trusted} by the classifier. The quantification of this association (e.g. $0.322$) is related to the drop in probability of being considered \codeformat{trusted} when such a feature is artificially removed from the feature vector. A negative importance indicates that a feature is associated with a \codeformat{uncertain} decision, as its removal increases the probability of a \codeformat{trusted} outcome. Note this analysis also allows us to conclude, for example, that a decision was made based on the \emph{absence}, rather than occurrence, of a given provenance type. 

\begin{table}
    \centering
    \caption{The average importances of provenance type intervals associated with parameters \text{FA2\_2}, \text{FA0\_5}, \text{FA1\_1} from \codeformat{CM-B} (see \tref{tab:decision-tree-type-definitions} for definitions) as produced by LIME.}
    \begin{tabular}{@{}rr@{}}
    \toprule
         Parameter &  Importance \\ 
         \midrule
          $\text{FA1\_1} \leq 2$ & $0.322$ \\
          $0 < \text{FA2\_2} \leq 2$ & $0.203$ \\
          $\text{FA0\_5} \leq 1$ & $0.0$ \\
          $\text{FA2\_2} = 0$ & $-0.204$ \\
          $\text{FA1\_1} > 2$ & $-0.320$ \\
        \bottomrule
    \end{tabular}
    \label{tab:feature-importances}
\end{table}

\begin{figure*}
\,\, 
\includegraphics[width = 0.43\textwidth]{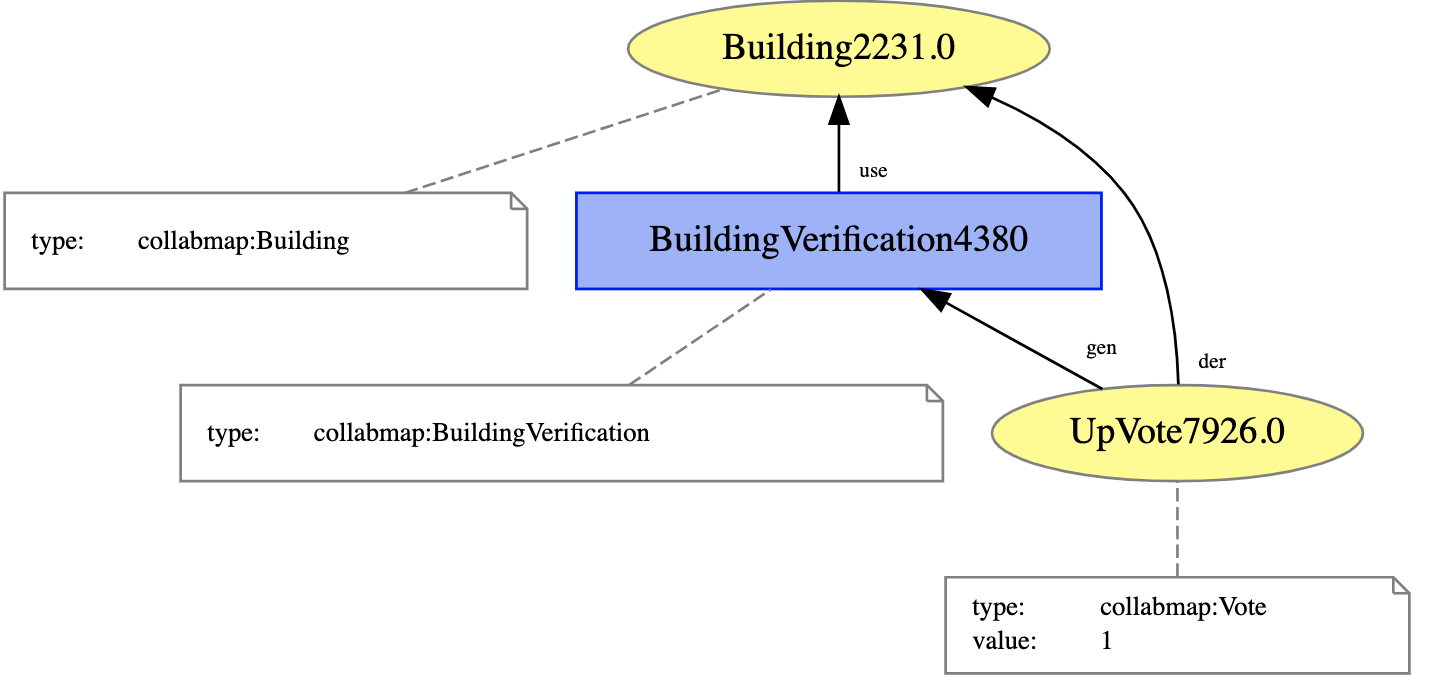} 
\hfill
\includegraphics[width = 0.43\textwidth]{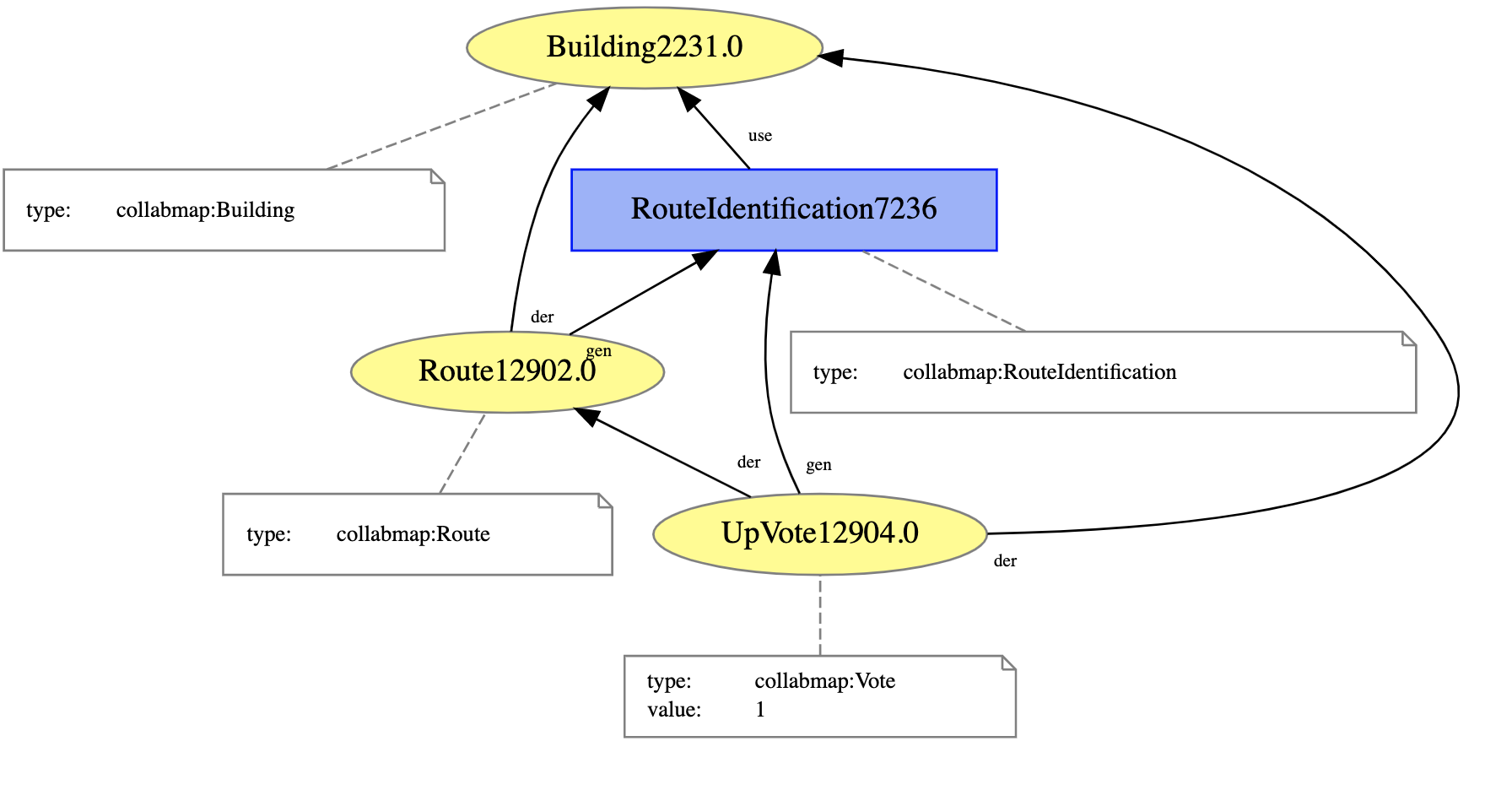} 
\,\,

\caption{An instance of a \text{FA1\_1} type (left), associated with node \textit{UpVote7926.0} and an instance of a \text{FA2\_2} type (right), associated with node \textit{UpVote12904.0}. Note that similar patterns associated to a DownVote would define the exact same provenance types, as both positive and negative votes are associated with the same provenance type \textit{collabmap:Vote}. } 
\label{fig:instance_retrievals}
\end{figure*}

\tref{tab:decision-tree-type-definitions} provides us with the type definitions associated with each one of the counters $\text{FA1\_1}$, $\text{FA2\_2}$, $\text{FA0\_5}$ (\ref{e1}).
Although $\text{FA0\_5}$ can simply be paraphrased as `a Buiding', when it comes to deeper types, however, their natural language descriptions need to make use of more general terms.
For example, $\text{FA2\_2}$ can be paraphrased as representing a `succession of two derivations from a Route or Building'.
Type descriptions alone, therefore, may not fully capture the complexity of graph patterns to give the user a broad understanding of their meaning.
For that reason, we propose the extraction of a subgraph associated with an occurrence of each one of the influential types in \tref{tab:feature-importances} (\ref{e4}).
Such a subgraph may by no means be the only graph pattern associated with a type but provides the user one concrete scenario of its occurrences.
\fref{fig:instance_retrievals}, for instance, depicts two subgraphs from \codeformat{CM-B} representing single instances of types \text{FA1\_1} and \text{FA2\_2}. Each is the induced subgraph from all nodes that are up to distance $\depth$, where $\depth$ is the depth (e.g. $1$ or $2$) of the provenance type of its root note (e.g. \textit{UpVote7926.0}, or \textit{UpVote12904.0}).
Finally, natural language paraphrasing of the retrieved subgraphs can be generated~\ref{e5}. For example, the following sentence was computationally generated from the graph on the right of \fref{fig:instance_retrievals} (i.e., an instance of \text{FA2\_2}):
\begin{quote}
    \textit {Route12902.0 was generated by RouteIdentification7236, UpVote12904.0 relates to Route12902.0., UpVote12904.0 was generated by RouteIdentification7236. RouteIdentification7236 used Building2231.0.}
\end{quote}

In \tref{tab:feature-importances}, the type $\text{FA1\_1}$ is associated with \codeformat{trusted} decisions when appears fewer than two times, and associated with \codeformat{untrusted} decisions if it is featured more frequently.
Note that in this case LIME aggregates the absence of type $\text{FA1\_1}$ and its single occurrence in the same interval.
In \fref{fig:instance_retrievals} this type refers to a vote on a building associated with a verification.
A natural way to interpret this is, since a \textit{Vote} can be either an up or down vote, we can take this to mean that a small number of votes likely means that they were up votes.
This is in line with how the CollabMap workflow~\cite{Ramchurn2013}: when there was a consensus with just a few votes (often up votes), the Building was declared \codeformat{trusted}.
Otherwise, if there was a dispute, more verifications, and hence votes, were requested, and thus it is more likely that the Building is deemed \codeformat{uncertain}.
In a narrative, we can say that a disputed decision is associated more with an \codeformat{uncertain} decision.
Moreover, a contrasting pattern is presented with the depth-$2$ type $\text{FA2\_2}$.
In particular, a non-zero number of occurrences was associated with \codeformat{trusted} decisions, whereas the absence of such type was deemed to be associated with a \codeformat{uncertain} one.
This once again is in line with the application's workflow: 
if a Building has not been declared \codeformat{trusted}, no routes are requested from crowd worker for the building. 

\begin{table}
    \centering
    \caption{The provenance types referred by the parameters \text{FA2\_2}, \text{FA1\_1}, \text{FA0\_5}. The prefix \textit{cm:} indicates an application-specific label from the CollabMap domain. \textit{cm:RouteID} represents a \textit{Route identification} type.}
    \begin{tabular}{@{}rl@{}}
    \toprule
         Param. & Provenance type represented \\ 
         \midrule
         \text{FA1\_1} & $\{\wdf, \wgb\}, \{\act,\ent, \textit{cm:Building}, \textit{cm:BuildingVerification}\}$ \\
        \text{FA2\_2} & $\{\wdf,\wgb\}, \{\wdf, \wgb, \used\}, \{\act, \ent, \textit{cm:Building}, \textit{cm:RouteID}\}$ \\
        \text{FA0\_5} & $\{\ent, \textit{cm:Building} \}$ \\
        \bottomrule
    \end{tabular}
    \label{tab:decision-tree-type-definitions}
\end{table}

Note that numbers of votes alone do not offer a strong explainable power.
This is because a vote can be either associated with a building verification or with a route identification, which have different implications when understanding the trustworthiness of a building.
For example, a high number of votes can be associated with a stronger likelihood of a building being deemed \codeformat{untrusted} if they are all associated with building verification activities.
On the other hand, the same high number of votes can be associated with a graph that has few building verification activities together with several route identification ones, which means the building was declared as \codeformat{trusted} by the workflow.
With that we can exemplify the importance of provenance types at higher depths as opposed to graph kernels that simply count the number of nodes of each type. 
Finally, the importance of the feature $\text{FA0\_5}$ was determined to be 0, which implies that the number of building offers no explanation power to the decision of the classifier.
Once again, this is in line with how the data set was constructed as \textit{Building} appears exactly once in each graph.

Through using an explainer, LIME, to build locally interpretable models for provenance kernel classifiers, the importance of each of the provenance types, including its constraints, can be determined, providing the user with a means to interpret the classifier's decisions.
We have shown that, in the context of the CollabMap building classification task, such interpretations reveal the logic captured from the data by the classifier (and later verified by us from checking the application's workflow).
Capitalised on the expressiveness of the provenance vocabulary and the patterns encoded by provenance types, insights into the logic of classifiers built on provenance kernels can be identified by following the steps we outlined above in this section.

\vspace{-0.3cm}
\section{Conclusions and Future Work}\label{sec:conclusion}

With the growing adoption of provenance in a wide range of application domains, the efficient processing and classification of provenance graphs have become imperative.
To that end, we introduced a novel graph kernel method tailored for provenance data. 
Provenance kernels make use of provenance types, which are an abstraction of a node's neighbourhood taking into account edges and nodes at different distances from it.
A provenance type is associated with each node for each depth value $\depth$. 
A vector is then produced for each graph: it counts the number of occurrences of each (non-empty) provenance type associated to nodes in this graph up to a given depth $\depth$.
These feature vectors are then used with standard machine learning algorithms, such as SVMs and decision trees, as shown in the previous two sections.
The computational complexity of producing feature vectors for a family of graphs with a total of $M$ edges is bounded by $\OO (\depth^2 M)$.
Note that the provenance kernel method is applicable to graphs in any domain as long as both edges and nodes are categorically labelled.

In Section~\ref{sec:evaluation}, we compared provenance kernels against state-of-the-art graph kernels and the PNA method in supervised learning tasks with six data sets of provenance graphs.
We showed that provenance kernels are among the fastest methods and, among those, they show high, if not the highest, classification accuracies in the three applications we investigated. 
An important benefit brought about by provenance types is that they can be used with a white-box model as shown in Section~\ref{sec:explainability} to help us understand better how certain classification is made by the model.
We provided a sequence of steps to extract explanations from classifications tasks by inspecting the importance of different features, making use of the fact that provenance types capture narratives from sequential events in provenance graphs.
We thus show how provenance kernels and types may give us further insights into why a particular graph was classified in a particular way. Also, in the context of CollabMap data set, such explanations were in line of our knowledge with respect to how the graphs were constructed. 

A given provenance type, when considering only PROV generic node labels, may re-appear in provenance graphs recorded from different applications.
The extent of how many types overlap across application domains is an open question. 
In line with what has been proposed by \cite{ipaw_library}, an interesting line of future work 
is to create a library of types across different domains and investigate whether there is a correlation between the high occurrence of certain provenance types and the role they play in classification tasks. Moreover, it may be interesting to develop natural language descriptions of extracted instances of provenance types that capture the recursive aspect of types.

\subsubsection*{Code and Data}
The data used for this article, along with the associated experiment code, is publicly available at \url{https://github.com/trungdong/provenance-kernel-evaluation}.

\vspace{-1cm}
\begin{IEEEbiographynophoto}{David Kohan Marzagão}
is a Research Assistant at the Department of Engineering Science, University of Oxford. He is also affiliated to King's College London, from where he obtained his PhD in Computer Science
\\
\\
\textbf{Dong Huynh} is a Research Fellow in the Department of Informatics, King’s College London.
He is the main developer of the PROV Python package  and ProvStore. 
 Previously, he developed computational models of trust and reputation for multi-agent systems.
\\
\\
\textbf{Ayah Helal} is a Lecturer in Computer Science at Exeter University and also affiliated to King's College London.
\\
\\
\textbf{Sean Baccas} is a Machine Learning Engineer at Polysurance. He obtained his MMath from the university of Durham.
\\
\\
\textbf{Luc Moreau} is a Professor of Computer Science and Head of the department of Informatics, at King's College London. 
Previously the co-chair of the W3C Provenance Working Group that produced the PROV standard.
\end{IEEEbiographynophoto}

\bibliographystyle{IEEEtran}
\bibliography{IEEEabrv,refs}

\end{document}